\documentclass{article}

\usepackage[utf8]{inputenc} 
\usepackage[T1]{fontenc}    
\usepackage{hyperref}       
\usepackage{url}            
\usepackage{booktabs}       
\usepackage{amsfonts}       
\usepackage{nicefrac}       
\usepackage{microtype}      

\usepackage[table,svgnames]{xcolor}       
\usepackage{graphicx}
\usepackage{wrapfig}
\usepackage{subcaption}
\usepackage{caption}
\usepackage{hyphenat}
\usepackage{setspace}
\usepackage{amsmath}
\usepackage{amssymb}
\usepackage{mathtools}
\usepackage{amsthm}
\usepackage{longtable}
\usepackage{multirow}
\usepackage{float}
\usepackage{tabularx}
\usepackage{enumitem}
\usepackage{threeparttable}
\usepackage{pifont}
\usepackage{algorithm}
\usepackage{algpseudocodex}
\usepackage{titletoc}
\usepackage{colortbl}
\usepackage{adjustbox}
\usepackage{stfloats}

\newcommand{\modelicon}[1]{%
    \raisebox{-0.15em}{%
        \includegraphics[height=1em]{#1}%
    }%
}

\newtheorem{lemma}{Lemma}
\newtheorem{theorem}{Theorem}

\usepackage{tikz}
\usetikzlibrary{shapes.geometric, arrows.meta, positioning, fit, calc, backgrounds, decorations.pathreplacing}

\usepackage[preprint]{my_style}

\usepackage{tcolorbox}
\tcbuselibrary{skins, breakable, listings}

\definecolor{brandblue}{RGB}{57,95,207}
\definecolor{linkblue}{HTML}{0064E0}
\definecolor{textgray}{HTML}{1C2B33}
\definecolor{boxbg}{HTML}{F1F4F7}
\definecolor{grey}{RGB}{128,128,128}
\hypersetup{
  colorlinks=true,
  linkcolor=brandblue,
  citecolor=brandblue,
  urlcolor=linkblue
}

\newtheorem{definition}{Definition}
\newtheorem{proposition}{Proposition}

\newcommand{\paperTitle}{ASSEMCAD: Production-Ready CAD \\ Assembly Generation from Natural Language}

\newcommand{\paperAuthors}{%
  {\sffamily\bfseries Yurui Dong$^{1,2*}$},
  {\sffamily\bfseries Shu Zou$^{1,4,*}$},
  {\sffamily\bfseries Siqi Li$^{1,5}$},
  {\sffamily\bfseries Nianchen Deng$^{1}$},
  {\sffamily\bfseries Hongbin Zhou$^{1}$},
  {\sffamily\bfseries Xuemeng Yang$^{1}$},
  {\sffamily\bfseries Pinlong Cai$^{1}$},
  {\sffamily\bfseries Licheng Wen$^{1,3}$},
  {\sffamily\bfseries Xinyu Cai$^{1\dag}$},
  {\sffamily\bfseries Botian Shi$^{1,3}$}
}

\newcommand{\paperAffiliations}{%
  {\normalsize $^1$ Shanghai Artificial Intelligence Laboratory},
  {\normalsize $^2$ Fudan University},
  {\normalsize $^3$ Shanghai Innovation Institute}\\
  {\normalsize $^4$ The Australian National University},
  {\normalsize $^5$ Zhejiang University}\\

}

\newcommand{\paperNotes}{%
  {\small $^*$ Equal Contribution},
  {\small $^\dag$ Corresponding Author}%
}

\newcommand{\publishDate}{\today}

\newcommand{\renderFrontBox}{%
    \tcbset{
    enhanced, frame hidden,
    colback=boxbg,
    left=0.5cm, right=0.5cm, top=0.5cm, bottom=0.5cm,
    arc=16pt,
    before skip=0pt,
    grow to left by=1.5pt, grow to right by=1.5pt,
    overlay={
    \node[anchor=north east, at=(frame.north east), xshift=-2.3cm, yshift=-0.5cm] 
        {\includegraphics[height=1cm]{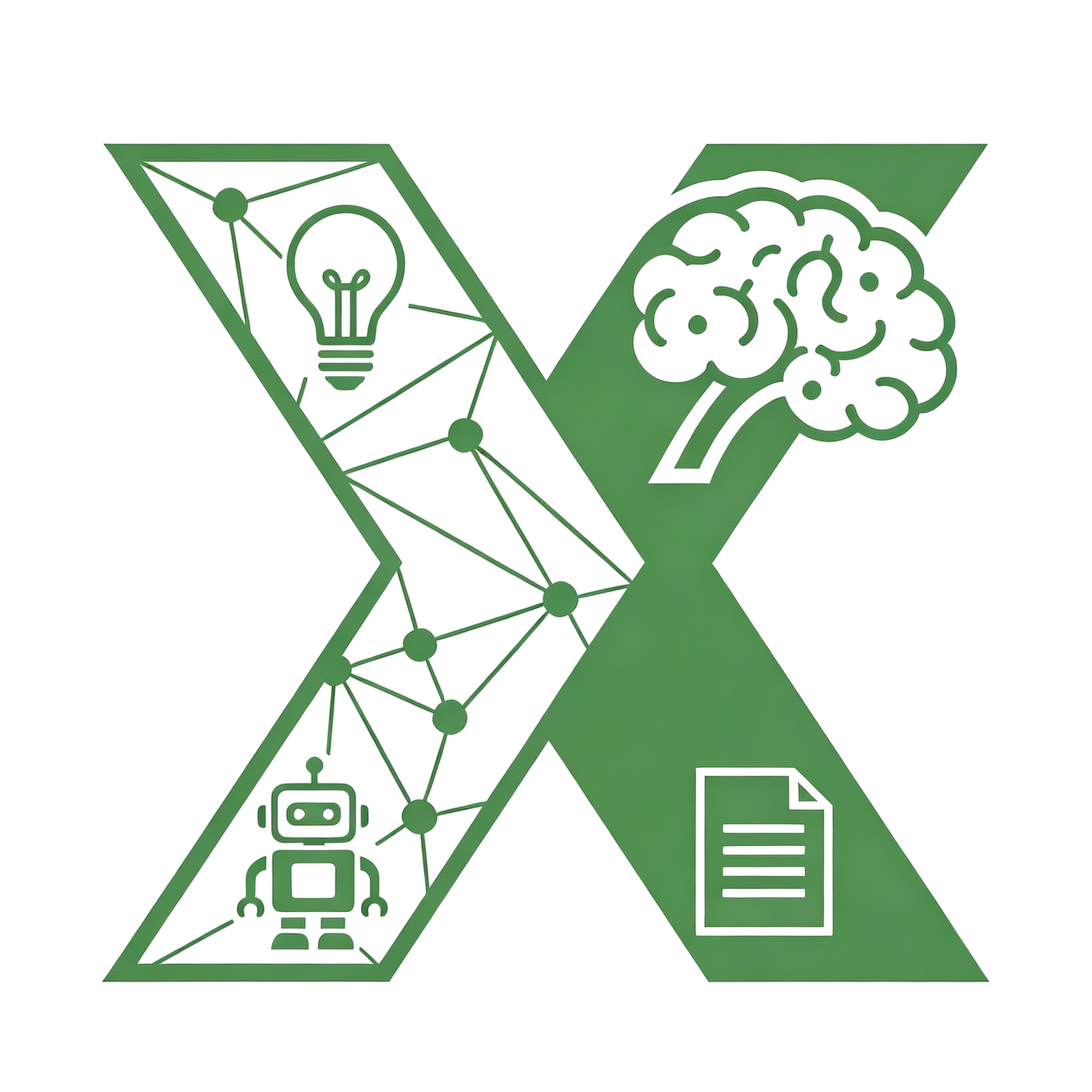}};
    \node[anchor=north east, at=(frame.north east), xshift=-0.5cm, yshift=-0.5cm] 
        {\includegraphics[height=1cm]{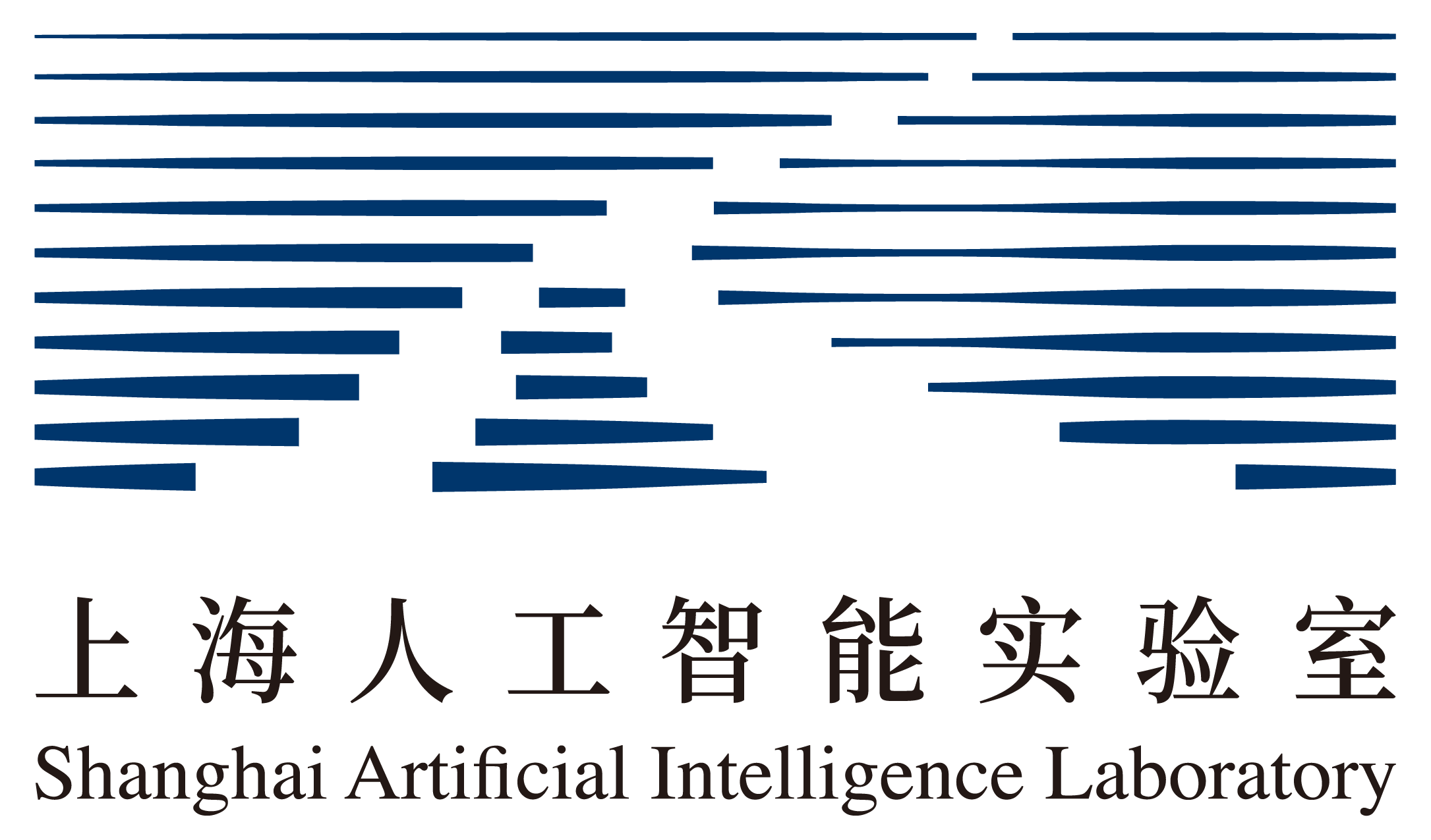}};
    }
  }%
  \begin{tcolorbox}
    \setlength{\parindent}{0cm}
    \setlength{\parskip}{0.5cm}
    {
      \setlength{\parskip}{0cm}
      \raggedright
      \nohyphens
      {
        \vskip 1cm 
        \setstretch{1.4}
        {\huge\sffamily\bfseries\textcolor{black}{\paperTitle}}\par
      }
      \vskip 0.25cm
      \paperAuthors\par
      \vskip 0.35cm
      \paperAffiliations\par
      \vskip 0.08cm
      \paperNotes\par
    }
    \vskip 0.2cm
    {\color{textgray}%

Recent advances in large language models and programmatic CAD have significantly improved Text-to-CAD generation for individual parts. However, production-ready mechanical assembly generation remains largely unsolved. Unlike single-part modeling, assemblies require coordinated reasoning over multiple components, functional interfaces, assembly relations, engineering principles, and physical consistency. Consequently, directly generating executable CAD code is insufficient for constructing mechanically valid and reusable assemblies.

We present \textsc{AssemCAD}, an axiom-grounded framework for production-ready CAD assembly generation from natural language. Instead of representing an assembly as monolithic CAD code, \textsc{AssemCAD} first constructs an axiomatic Assembly Specification consisting of typed parts, geometry-backed ports, executable mates, and engineering axioms. Each assembly relation is explicitly grounded in one or more engineering principles, making the resulting specification interpretable, reusable, and verifiable. To realize this specification, \textsc{AssemCAD} introduces a port- and mate-based CAD assembly library that executes symbolic assembly relations through deterministic mate transformations and validates declared interfaces using concrete B-Rep geometric evidence.

Built on this representation and library, \textsc{AssemCAD} further supports on-demand synthesis of reusable parametric component factories for both standard and open-world geometries. A deterministic verification pipeline then checks interface validity, clash consistency, graph connectivity, degree-of-freedom constraints, and engineering-rule compliance, producing production-ready CAD assemblies together with interpretable verification reports. Experiments on \textsc{AssemBench} show that \textsc{AssemCAD} substantially improves assembly preservation and physical validity over code-centric CAD generation baselines, while generalizing across different foundation-model backbones. By combining axiom-grounded assembly reasoning with deterministic geometric execution, \textsc{AssemCAD} extends Text-to-CAD from isolated part generation toward production-ready mechanical assembly design.

\par}
    \vskip 0.4cm
    {
      \setlength{\parskip}{0cm}
      {\small {\sffamily\bfseries \raisebox{-0.2em}{\includegraphics[width=0.025\linewidth]{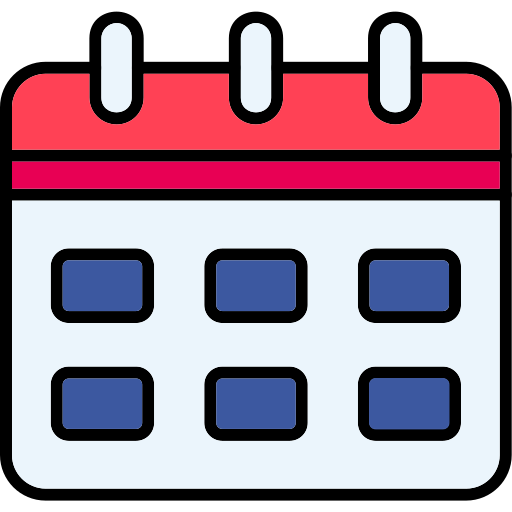}}~~Date:} \publishDate}\par%
    }
  \end{tcolorbox}
  \tcbset{reset}
}

\begin{document}

\newgeometry{top=1in, bottom=0.75in, textwidth=6.3in, textheight=9in}
\renderFrontBox

\begin{figure}[H]
  \centering
    \includegraphics[width=0.95\textwidth]{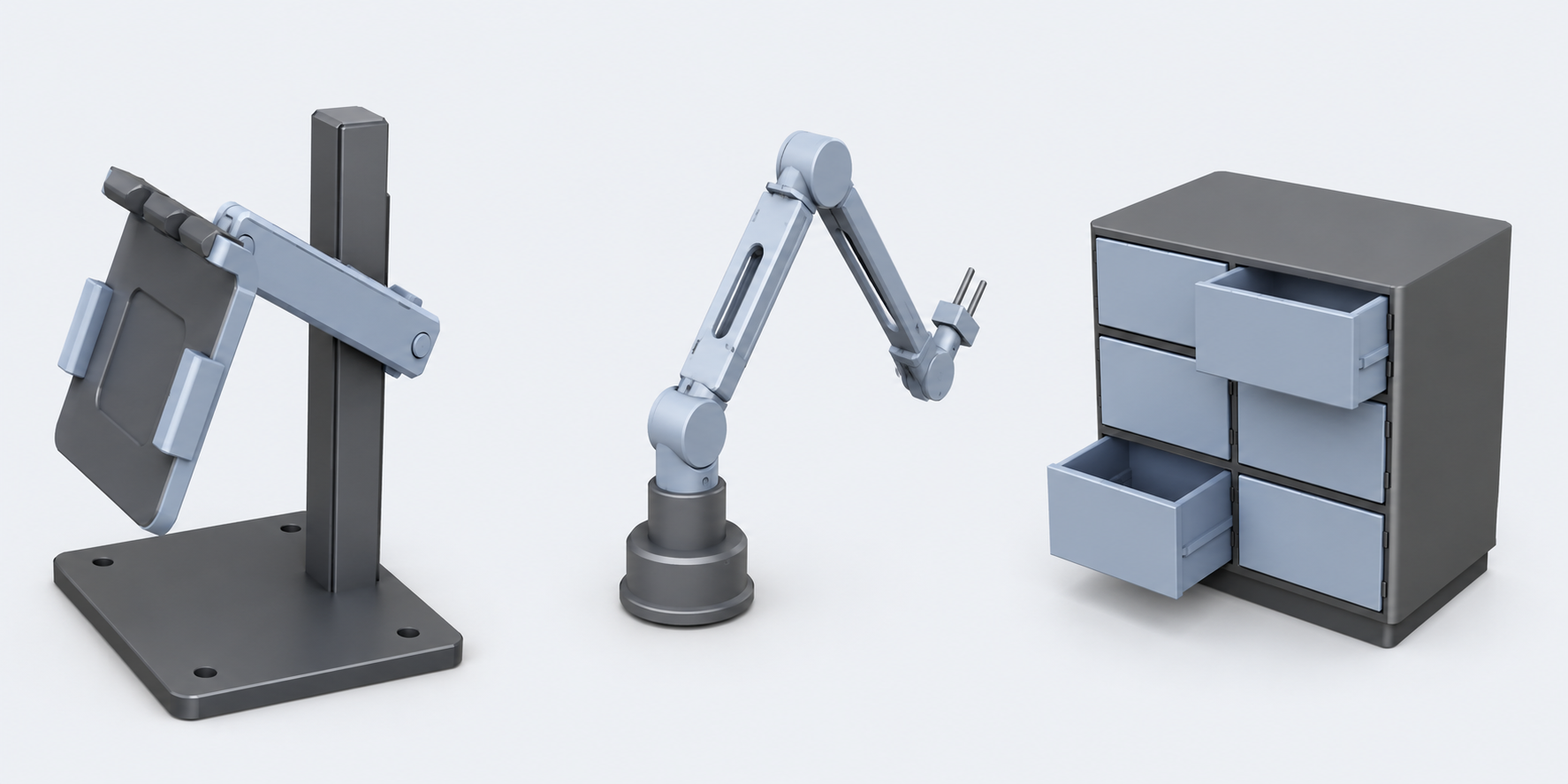}
    \caption{
    Representative production-ready CAD assemblies generated directly from natural language. \textbf{ASSEMCAD} generates mechanically executable assemblies with valid assembly structures and articulated joints, rather than merely static CAD geometry.
    }
    \label{fig:teaser}
\end{figure}



\section{Introduction}

Computer-aided design (CAD)~\cite{briere2012comparing} models are fundamental to modern engineering workflows, supporting downstream applications such as manufacturing, simulation, and assembly planning. While traditional CAD systems rely on interactive commercial software (e.g., SolidWorks)~\cite{lambourne2021brepnet,wu2021deepcad,verma2018autodesk}, where design intent is implicitly encoded in GUI operations and feature histories, programmatic CAD frameworks such as CadQuery and OpenSCAD~\cite{wright2024cadquery,machado2019parametric} provide explicit, parametric, and editable representations that naturally support procedural modeling and automated reasoning~\cite{xie2025text,li2024cad,barkley2026cadsmith,niu2026cme,hu2026itercad,ai2026comact}. Combined with recent advances in large language models (LLMs), substantial progress has been made in generating executable CAD models from natural language and visual inputs. Representative methods such as Text2CAD~\cite{khan2024text2cad}, PointCloud2CAD~\cite{liu2024point2cad}, and Cadrille~\cite{kolodiazhnyi2025cadrille} demonstrate strong capabilities for text-to-CAD, image-to-CAD, and multimodal CAD generation, while frontier code-centric foundation models (e.g., Claude and Codex~\cite{claudecode2025,openai2025codex,gemini35,glm2025glm45,qwen37}) further enable zero-shot executable CAD program synthesis. Together, these advances establish natural-language-driven CAD generation as a promising paradigm for automated engineering design.

Despite these advances, as illustrated in Figure~\ref{fig:intro}, production-ready\footnote{In this work, production-ready refers to assemblies that are executable, structurally connected, interface-valid, and free of unintended geometric interference, rather than fully optimized for manufacturing processes, materials, or tolerance analysis.} CAD assembly generation remains fundamentally more challenging than isolated part synthesis. Unlike standalone parts, mechanical assemblies are structured systems whose functionality depends not only on individual components but also on their interactions. This introduces three fundamental challenges. First, assemblies require explicit reasoning over functional relationships and geometric constraints across multiple components, rather than generating independent geometries. Second, the component space is inherently open-ended, requiring support for both standard parts and application-specific geometries beyond predefined templates. Third, correctness extends beyond executable CAD code or individually valid solids. Production-ready assemblies additionally require physically realizable interfaces, correct assembly relations, and interference-free configurations. These challenges fundamentally distinguish assembly generation from single-part CAD modeling and explain the gap between the left and right examples in Figure~\ref{fig:intro}.

These challenges suggest that production-ready CAD assembly generation should be viewed as an engineering pipeline rather than a code generation problem. As illustrated in Figure~\ref{fig:intro}, producing executable CAD code alone is insufficient, as failures may still arise from invalid components, incorrect assembly relations, or inconsistent assembly configurations. We therefore decompose the problem into three complementary stages, where foundation models perform semantic reasoning and structured planning, while programmatic CAD kernels provide deterministic geometry construction and physical verification. Specifically, the pipeline addresses three fundamental questions: what to build, how to build it, and how to verify the resulting assembly. This decomposition enables stage-wise validation and reliable production-ready assembly generation.

Guided by this insight, we present \textsc{AssemCAD}, a framework for production-ready CAD assembly generation from natural-language specifications. \textsc{AssemCAD} combines foundation models with programmatic CAD kernels to bridge semantic reasoning and deterministic geometric execution. Corresponding to the three stages of assembly generation, the framework comprises (1) a structured assembly specification for representing components, interfaces, and assembly semantics, (2) an on-demand component synthesis module for generating reusable parametric parts, and (3) a geometry-aware verification module for ensuring physical consistency and assembly correctness. Together, these components enable reliable production-ready CAD assembly generation through a transparent and verifiable engineering pipeline.

Our contributions are four-fold:

\begin{itemize}[leftmargin=*]

\item We present \textsc{AssemCAD}, a framework for production-ready CAD assembly generation from natural language. It formulates assembly generation as the construction of verifiable engineering specifications rather than direct CAD code generation.

\item We introduce an axiomatic assembly specification that explicitly represents typed parts, geometry-backed ports, executable mates, and engineering axioms. By grounding each assembly relation in explicit engineering principles, the specification provides an interpretable bridge between natural-language intent and geometric realization.

\item We develop a port- and mate-based CAD assembly library that operationalizes the specification through reusable assembly-ready components, deterministic mate transforms, and port-geometry consistency checks. This library supports both standard components and open-world synthesized parts while ensuring that declared interfaces are physically meaningful.

\item We establish \textsc{AssemBench} together with an assembly-centric evaluation protocol, and demonstrate through extensive experiments that \textsc{AssemCAD} substantially improves assembly preservation, physical validity, and generalization across different foundation models.

\end{itemize}


\begin{figure*}[t]
\centering
\includegraphics[width=\textwidth]{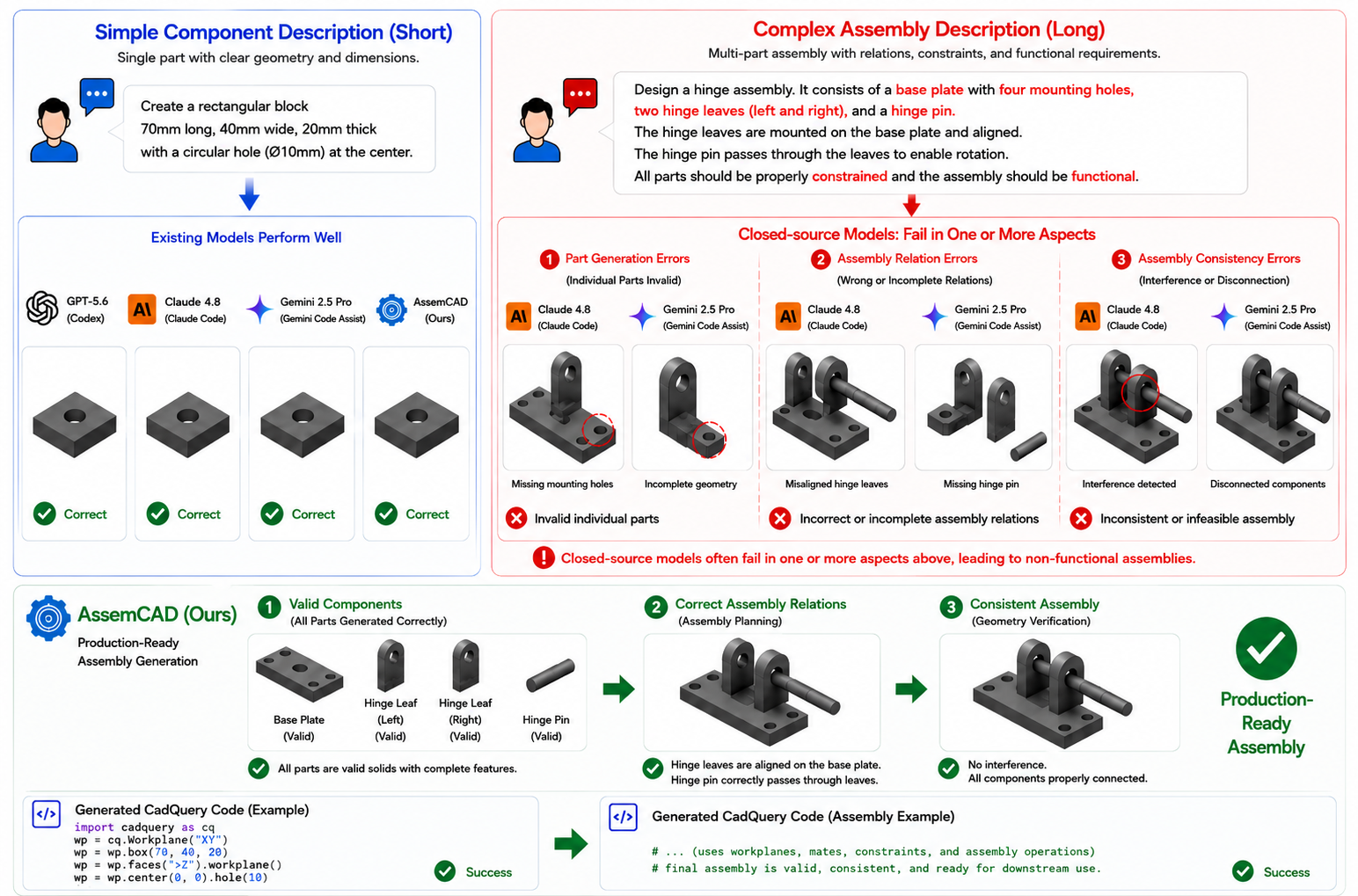}
\caption{
Motivation and overview of \textsc{AssemCAD}. Existing foundation models perform well on single-part CAD generation but struggle with production-ready assembly generation, where failures commonly arise from invalid components, incorrect assembly relations, and inconsistent assembly configurations. To address these challenges, \textsc{AssemCAD} decomposes assembly generation into three stages: structured assembly specification, on-demand component synthesis, and geometry-aware verification, enabling the generation of production-ready CAD assemblies from natural-language specifications.
}
\label{fig:intro}
\end{figure*}

\section{Related Work}

\subsection{Programmatic CAD Generation and B-Rep Learning}

Recent work on generative CAD has explored multiple representations, including construction sequences, executable CAD programs, and boundary representations. Sequence-based methods~\cite{wu2021deepcad, xu2022skexgen, xu2023hierarchical} model CAD construction histories autoregressively, while Fusion 360 Gallery~\cite{willis2021fusion360} provides human-authored sketch-and-extrude programs and an interactive reconstruction environment, and Free2CAD~\cite{li2022free2cad} parses freehand drawings into CAD command sequences. Text-conditioned and multimodal methods~\cite{wang2025cad, xu2024cad} further extend this paradigm to natural-language~\cite{xie2025text, khan2024text2cad, guan2026cad}, image~\cite{kolodiazhnyi2025cadrille}, and point-cloud inputs~\cite{liu2024point2cad, rukhovich2025cad}, producing editable CAD programs or command sequences. In parallel, B-Rep representation learning methods such as UV-Net~\cite{Jay2021uvnet} and BRepNet~\cite{lambourne2021brepnet} operate directly on parametric surfaces and CAD topology, while B-Rep reconstruction and generative models such as ComplexGen~\cite{guo2022complexgen}, SolidGen~\cite{jayaraman2023solidgenautoregressivemodeldirect}, and BrepGen~\cite{xu2024brepgenbrepgenerativediffusion} synthesize boundary representations without relying on construction-history supervision. These works establish strong foundations for part-level CAD generation and reconstruction. However, they primarily focus on individual solids or isolated construction histories, and do not explicitly model inter-part assembly relations, functional interfaces, or physical consistency required by production-ready assemblies.

\subsection{CAD Assembly Modeling and Relation Inference}

Mechanical assemblies are not merely collections of valid parts; their functionality depends on mates, contacts, kinematic constraints, and assembly topology~\cite{jones2021automatedatasetlearningapproach}. Classical constraint-based assembly modeling studies represent assemblies through geometric mating constraints and reason about under-, over-, and fully-constrained configurations, interference, and remaining degrees of freedom~\cite{ANANTHA1996707, zou2022review}. Recent learning-based methods revisit these problems using CAD data. JoinABLe~\cite{Willis2022Join} predicts parametric CAD joints from solid models and releases assembly data with joints, contact surfaces, holes, and assembly graphs, while Mates2Motion~\cite{noeckel2023mates2motionlearningmechanicalcad} learns degrees of freedom between mated CAD parts. Beyond joint prediction, previous work also investigates B-Rep-based automatic mate prediction~\cite{jones2021automatedatasetlearningapproach} and physically feasible assembly planning, often through assembly-by-disassembly~\cite{tian2022assemble, tian2023asap}. These methods typically assume that component geometries are already given and focus on recovering or predicting assembly relations. In contrast, AssemCAD generates both open-world components and their assembly specification from natural language, and verifies the resulting structure through deterministic geometric execution.

\subsection{Foundation Models and Agentic CAD Systems}

Recent foundation models~\cite{claudecode2025,openai2025codex,gemini35,glm2025glm45,qwen37} have demonstrated remarkable capabilities in code generation, spatial reasoning, and long-horizon planning, while agentic frameworks further improve reliability through iterative tool use, execution feedback, external memories, and reusable skills. These advances have naturally extended to CAD generation~\cite{xu2024cad}. Cad-assistant~\cite{mallis2025cad} formulates CAD interaction as a generic tool-augmented task-solving problem, where a vision-language model plans CAD-specific actions that are executed through FreeCAD and Python APIs. CADSmith~\cite{barkley2026cadsmith} improves text-to-CadQuery generation with a multi-agent pipeline and programmatic geometric validation, combining execution repair with CAD-kernel measurements such as bounding boxes, volumes, and solid validity. Solver-aided systems such as AIDL~\cite{jones2025solver} further offload low-level spatial reasoning to geometric constraint solvers through a hierarchical domain-specific language.

These systems demonstrate the promise of tool-augmented, execution-grounded, and solver-aided foundation models for CAD generation. However, their reliability is mainly achieved through local code repair, geometric validation, or solver-assisted construction. In contrast, AssemCAD improves practical reliability by coupling an assembly-oriented component library, a curated engineering axiom system, and a deterministic verification pipeline. This design grounds generation in reusable components and explicit port-mate semantics, while verifying clashes, connectivity, degrees of freedom, and assembly consistency, enabling more robust production-ready mechanical assembly generation.


Overall, prior work has advanced CAD generation, assembly modeling, and foundation-model-driven CAD systems from complementary perspectives. Programmatic and B-Rep methods improve part-level CAD synthesis, assembly modeling methods recover joints, mates, and motions from existing geometries, while agentic CAD systems enhance executability through tool use, repair, validation, or solver grounding. However, production-ready CAD design requires more than geometric plausibility. Prior work on knowledge-based engineering, ontology-based assembly design, and semantic mates shows that assembly relations carry engineering meaning beyond raw spatial transformations. Inspired by this line of work, AssemCAD associates each mate with engineering axioms covering foundations, constraints, kinematics, bearings, gears, fastening, hinges, and related categories. Unlike conventional knowledge-based systems that mainly support retrieval or post-hoc checking, AssemCAD integrates engineering semantics into the generative loop: axioms guide semantic decomposition, constrain component synthesis, and are propagated to the final verification report. Together with an assembly-oriented component library and deterministic verification pipeline, this enables AssemCAD to generate more reliable production-ready mechanical assemblies from natural language.

\section{Problem Formulation}
\label{sec:problem}

\paragraph{Traditional CAD Generation.}
Existing CAD generation methods typically formulate the task as directly translating textual or visual descriptions into geometric representations or executable CAD programs~\cite{khan2024text2cad,kolodiazhnyi2025cadrille}. Such formulations are well suited for individual part generation, where the output is a standalone geometry represented by a procedural CAD program or a sequence of construction operations. However, production-ready mechanical assemblies consist of multiple interacting components whose correctness depends not only on geometry, but also on interfaces, assembly relationships, and engineering semantics. In this work, we formulate production-ready CAD assembly generation as the generation of both a structured assembly specification and its corresponding geometric realization, enabling deterministic construction and engineering-aware verification.

\subsection{Assembly Formulation}

\begin{tcolorbox}[colback=gray!5,colframe=black!30,title={Core Formulation}]
Production-ready CAD assembly generation should not directly map natural-language descriptions to final geometry. Instead, it should first construct an intermediate representation that explicitly captures \textbf{components}, \textbf{interfaces}, \textbf{assembly relationships}, and \textbf{engineering semantics}. This intermediate representation enables \textbf{stage-wise construction} and \textbf{geometry-aware verification}, providing a principled bridge between semantic specifications and geometric realization.
\end{tcolorbox}

Following this formulation, we introduce an intermediate representation, termed \emph{Assembly Specification}, which serves as the bridge between semantic descriptions and geometric realization.

\begin{definition}[Assembly Specification]
\label{def:spec}
An assembly specification is defined as a triplet
\[
\mathcal{S}=(\mathcal{P},\mathcal{M},\mathcal{A}),
\]
where $\mathcal{P}$ denotes a set of typed parts, $\mathcal{M}$ denotes a set of typed mates describing assembly relationships, and $\mathcal{A}$ denotes the engineering semantics associated with these relationships.

\begin{itemize}[leftmargin=*,nosep]

\item
$\mathcal{P}=\{p_1,\ldots,p_n\}$ is a set of typed parts, where each part
\[
p_i=(\texttt{id}_i,f_i,\theta_i,\Pi_i)
\]
consists of a factory
$f_i\in\mathcal{F}$,
its parameters $\theta_i$,
and a set of ports $\Pi_i$.

\item
$\mathcal{M}=\{m_1,\ldots,m_k\}$ is a set of typed mates, where each mate
\[
m_j=(\tau_j,b_j,c_j,\omega_j)
\]
specifies a mate type
$\tau_j\in\mathcal{T}$,
two endpoints,
and optional mate parameters
$\omega_j$.

\item
$\mathcal{A}\subseteq\mathcal{A}_{\mathrm{axioms}}$
denotes the subset of engineering axioms that justify the assembly relationships.

\end{itemize}

Specifically, $\mathcal{A}$ captures the engineering semantics associated with assembly relationships. Its construction and role are described in the following subsection.

\end{definition}

\subsection{Engineering Semantics}

\begin{wraptable}{r}{0.50\linewidth}
\vspace{-1em}
\centering
\small
\caption{Engineering axiom categories ($|\mathcal A_{\mathrm{axioms}}|=62$, curated from 139 candidates).}
\label{tab:axioms}
\begin{tabular}{ccr|ccr}
\toprule
Code & Category & $n$ & Code & Category & $n$\\
\midrule
F & Foundations & 5 &
N & Structural & 7\\
C & Constraints & 5 &
P & Power/Shafts & 3\\
K & Kinematic & 5 &
S & Sequencing & 6\\
B & Bearings & 4 &
T & Fastening & 12\\
G & Gears & 9 &
H & Hinges & 6\\
\bottomrule
\end{tabular}
\vspace{-1em}
\end{wraptable}
Unlike conventional CAD systems that rely solely on geometric constraints, production-ready assemblies must also satisfy engineering semantics. For example, meshing gears require compatible transmission parameters, bearings require coaxial alignment, and fastening components must satisfy assembly constraints beyond pure geometry. To explicitly encode such knowledge, \textsc{AssemCAD} introduces an engineering knowledge base consisting of 62 engineering axioms organized into 10 mutually exclusive and collectively exhaustive (MECE) categories, summarized in Table~\ref{tab:axioms}. Of these, 41 are distilled from assembly textbooks through a three-phase LLM-assisted extraction pipeline, and 21 are synthesized to cover component families present in the CAD libraries but absent from the source corpus; the complete extraction methodology is detailed in Appendix~\ref{app:axioms}.

Rather than treating assembly relationships as purely geometric connections, each mate in the assembly specification is associated with one or more engineering axioms that justify its existence and configuration. These axioms provide a shared semantic layer throughout the entire pipeline: they guide semantic decomposition, support component synthesis, and are propagated to the final verification report. Consequently, assembly decisions are grounded not only in geometric validity, but also in interpretable engineering principles.

\subsection{Assembly Generation}

Based on the assembly specification and engineering semantics, the production-ready CAD assembly generation task is formally defined as follows.

\begin{definition}[Assembly Generation]
\label{def:assembly_gen}

Given a natural-language description
$d\in\Sigma^*$,
the goal is to produce both an assembly specification
$\mathcal{S}$
and its geometric realization
\[
G:\mathcal{S}\rightarrow\mathbb{R}^3,
\]
such that:

\begin{enumerate}[leftmargin=*,nosep]

\item
Each part
$p_i\in\mathcal{P}$
is instantiated as a valid B-Rep solid with positive volume.

\item
Every mate
$m_j\in\mathcal{M}$
is geometrically satisfiable under
$G$.

\item
The resulting assembly graph
$(\mathcal{P},\mathcal{M})$
is connected with a designated root component.

\item
No unintended interference exists, i.e., for any
$(p_i,p_j)\notin\mathcal{E}_{\mathrm{expected}}$,
the intersection volume between the corresponding solids is below a predefined threshold
$\tau$.

\end{enumerate}

\end{definition}
\section{AssemCAD}
\label{sec:method}

\subsection{System Overview}

\begin{figure*}[t]
\centering
\includegraphics[width=\textwidth]{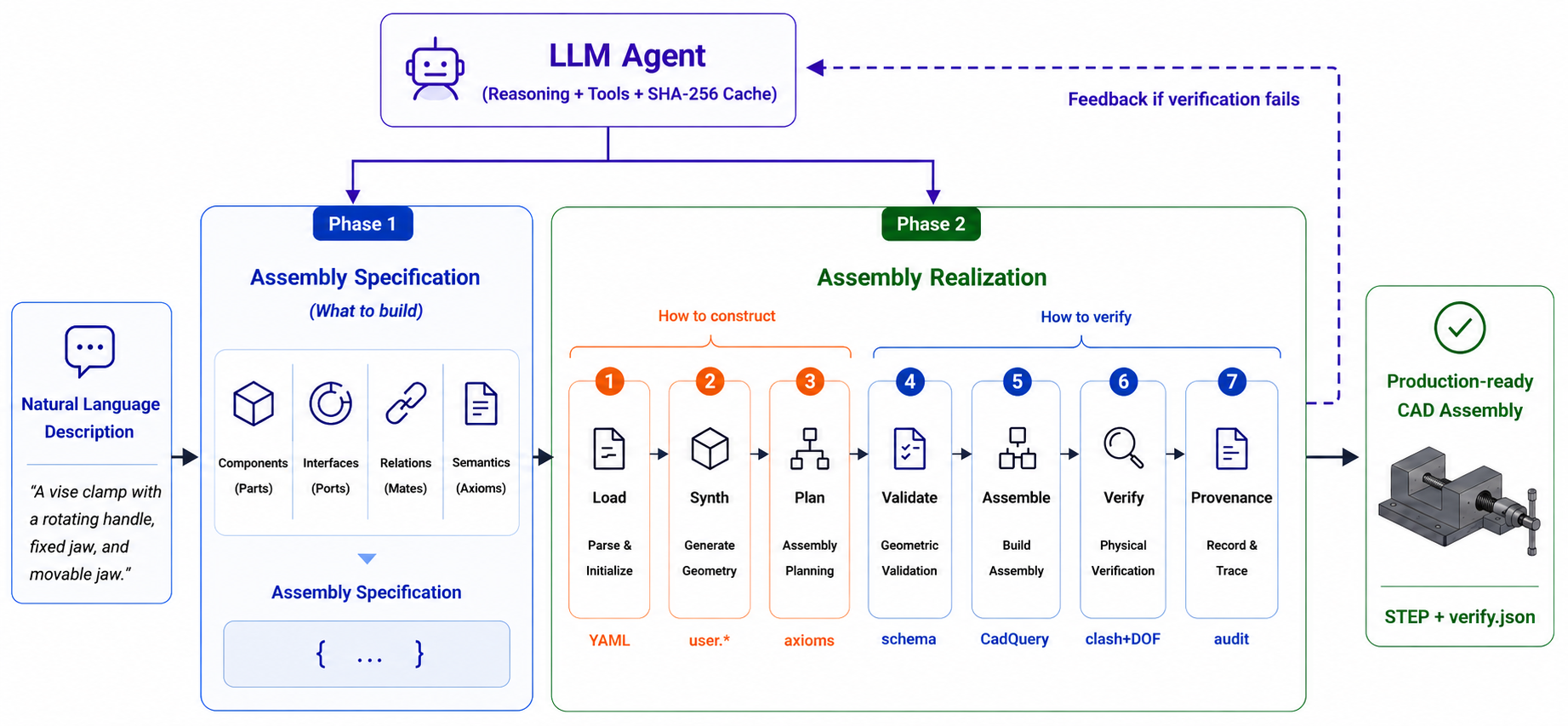}
\caption{
Overview of \textsc{AssemCAD}. The framework first transforms natural-language descriptions into a structured Assembly Specification, and then realizes the specification through an execution pipeline for geometry construction and engineering verification. Verification failures are fed back to the LLM agent for specification refinement, while successful executions produce production-ready CAD assemblies together with verification reports.
}
\label{fig:pipeline}
\end{figure*}

Building upon the problem formulation in Section~\ref{sec:problem}, \textsc{AssemCAD} decomposes production-ready CAD assembly generation into two complementary stages: \textbf{\emph{Assembly Specification}} and \textbf{\emph{Assembly Realization}}, as illustrated in Figure~\ref{fig:pipeline}. The framework first converts natural-language descriptions into a structured Assembly Specification using an LLM agent, and subsequently realizes the specification through a deterministic realization pipeline. Verification failures are fed back to the LLM agent for iterative refinement, while successful executions produce production-ready CAD assemblies together with verification reports. Unlike conventional end-to-end CAD generation systems, \textsc{AssemCAD} explicitly separates semantic reasoning from geometric realization, allowing each stage to be independently validated and refined through verification feedback.

\paragraph{\emph{Assembly Specification}.}
Given a natural-language description, the LLM agent generates a structured Assembly Specification that explicitly captures components, interfaces, assembly relationships, and engineering semantics, providing a semantic representation of the target assembly.

\paragraph{\emph{Assembly Realization}.}
Given the Assembly Specification, the realization pipeline progressively constructs, assembles, and verifies the corresponding CAD assembly through seven sequential stages, ensuring geometric validity, physical consistency, and traceable execution.

\begin{figure*}[t]
\centering
\begin{minipage}[t]{0.58\textwidth}
\vspace{0pt}
\begin{algorithm}[H]
\caption{Structured Decomposition with Bounded Repair}
\label{alg:decompose}
\small
\textbf{Require:} NL description $d$, LLM client $\mathcal{L}$, max rounds $K$\\
\textbf{Ensure:} Valid AssemblyDraft $\hat{S}$ or error
\begin{algorithmic}[1]
\State $\hat{S} \leftarrow \mathcal{L}(d)$
\For{$k=1,\dots,K$}
    \State $(valid,errors)\leftarrow \textsc{ValidateDraft}(\hat{S})$
    \If{$valid$}
        \State \Return $\hat{S}$
    \EndIf
    \State $\hat{S}\leftarrow \mathcal{L}(d,errors)$
\EndFor
\State \textbf{raise} DecompositionError
\end{algorithmic}
\end{algorithm}
\end{minipage}
\hfill
\begin{minipage}[t]{0.40\textwidth}
\vspace{10pt}
\centering
\small
\captionof{table}{Mate types with geometric semantics and configurable options.}
\label{tab:mate_types}
\vspace{-3pt}
\begin{tabular}{@{}lll@{}}
\toprule
\textbf{Mate Type $\tau$} &
\textbf{Constraint} &
\textbf{Options} \\
\midrule
\texttt{face\_to\_face}
& Coplanar normals
& $u,v,d_n$ \\
\texttt{coaxial}
& Shared axis
& $\alpha,d$ \\
\texttt{coaxial\_face}
& Axis + contact
& $d$ \\
\texttt{gear\_mesh}
& Pitch tangency
& $m,r$ \\
\texttt{press\_fit}
& Interference fit
& $d$ \\
\texttt{thread\_engage}
& Thread engagement
& $d_t$ \\
\texttt{snap\_to\_face}
& Surface snap
& $u,v,d_n$ \\
\bottomrule
\end{tabular}
\end{minipage}
\end{figure*}

\subsection{Phase I: What to Build}
The first phase determines \emph{what to build}. Given a natural-language description, \textsc{AssemCAD} performs semantic decomposition and constructs an \emph{Assembly Specification}, which serves as the interface between language understanding and the downstream deterministic pipeline. Rather than directly generating geometry, the system first identifies components, interfaces, assembly relationships, and engineering semantics, thereby separating semantic intent from geometric realization.

Given a natural-language description, the LLM decomposes the design intent into an \texttt{AssemblyDraft} containing component factories, typed ports, mate relationships, and engineering semantics. Since the generated specification may violate structural constraints, a bounded repair loop is employed to validate and iteratively refine the draft. Algorithm~\ref{alg:decompose} summarizes the decomposition procedure.

To explicitly model assembly interfaces, spatial relationships between parts are mediated through \emph{ports}—typed geometric attachment points declared by each factory. Unlike conventional CadQuery assemblies, where rigid transformations are directly specified between parts, \textsc{AssemCAD} represents connectivity through typed ports and mates. This interface-centric abstraction decouples assembly semantics from geometric realization and enables reusable attachment patterns across heterogeneous components. In our implementation, we define 12 port types and 7 mate types, summarized in Table~\ref{tab:mate_types}.

\begin{definition}[Port-Mate Compatibility]
\label{def:compat}
A mate of type $\tau\in\mathcal T$ is said to be compatible with a port pair $(\pi_b,\pi_c)$ if and only if $(t_b,t_c)\in C_\tau$, where
$C_\tau\subseteq\mathcal T_{\mathrm{port}}\times\mathcal T_{\mathrm{port}}$
denotes the compatibility relation associated with mate type $\tau$.
\end{definition}

Each mate specifies both geometric semantics and configurable parameters. For example, the \texttt{face\_to\_face} mate supports in-plane offsets $(u,v)$ and normal displacement $d_n$, allowing multiple components to reference the same base interface without geometric coincidence. This mechanism differs fundamentally from conventional rigid transformations and provides a structured foundation for subsequent construction and geometry-aware validation.

Beyond geometric semantics, each mate in the Assembly Specification is further grounded by referencing one or more engineering axioms from the axiom set $\mathcal{A}_{\mathrm{axioms}}$ introduced in Section~\ref{sec:problem}. During decomposition, the LLM is required to explicitly justify every mate by selecting appropriate axioms. For instance, a \texttt{coaxial} mate connecting a shaft to a bearing must reference axiom C-02 (clearance-fit constraint), while a \texttt{gear\_mesh} mate must reference axiom G-02 (module and pressure-angle compatibility). This axiom grounding serves three purposes: (1)~it constrains the LLM's assembly reasoning by requiring engineering justification rather than purely geometric plausibility; (2)~it makes the resulting specification interpretable, as each assembly relation carries an explicit engineering rationale traceable to the axiom set; and (3)~it enables downstream verification to check not only geometric consistency but also engineering-rule compliance. Once a valid and axiom-grounded Assembly Specification is obtained, the downstream pipeline is responsible for constructing and verifying the corresponding CAD assembly.

\subsection{Phase II: How to Construct}

The second phase determines \emph{how to construct individual components}. Given an Assembly Specification, \textsc{AssemCAD} transforms symbolic parts into executable geometry. This phase is built upon a port- and mate-based CAD assembly library that provides both a registry of reusable parametric component factories and an on-demand synthesis mechanism for open-world geometries.

\paragraph{Assembly Library and Factory Registry.}
The assembly library extends CadQuery with a typed port system and deterministic mate semantics (detailed in Appendix~\ref{app:cad_assembly_ext}). At its core, the library maintains a registry of 13 built-in parametric factories spanning common mechanical components including structural primitives (\texttt{rect\_plate}, \texttt{l\_bracket}, \texttt{u\_channel}, \texttt{housing\_block}), rotational components (\texttt{stepped\_shaft}, \texttt{coupling\_hub}, \texttt{boss}), and interface elements (\texttt{bolt\_circle\_flange}), etc. Additionally, the library provides adapter modules that wrap third-party parametric libraries---\texttt{cq\_warehouse} for standardized fasteners (bolts, nuts, washers) and bearings, and \texttt{cq\_gears} for involute spur gears, augmenting each component with typed port annotations. Every factory in the registry produces not only valid B-Rep geometry but also a set of typed ports that are compatible with the port-mate abstraction defined in Phase~I. This co-location of geometry construction and port declaration prevents the geometry--metadata drift that commonly arises when ports are annotated separately from their geometric features.

\paragraph{On-Demand Component Synthesis.}
For components beyond the built-in registry, \textsc{AssemCAD} supports open-world synthesis by generating previously unseen factories on demand via the LLM.
Each component is associated with a parameterized generator. During execution, \textsc{AssemCAD} first attempts to retrieve an existing implementation from the factory registry. If no matching implementation exists, the component is synthesized on demand by the LLM. The generated CadQuery module exposes both geometric parameters and the ports declared in Phase~I, thereby ensuring compatibility with the port-mate abstraction.

To avoid redundant synthesis, each request is identified by a content hash computed from the component description, expected ports, and parameter specification. Existing implementations are directly reused, whereas unseen requests trigger a synthesis procedure driven by the LLM.

Unlike template-based CAD generation methods, executable code alone is insufficient. Generated components must simultaneously satisfy geometric validity and interface correctness. Therefore, \textsc{AssemCAD} employs an iterative generation-repair loop equipped with a three-layer validation mechanism. Validation failures are converted into repair feedback and fed back to the LLM until a valid implementation is obtained or the retry budget is exhausted. Algorithm~\ref{alg:synthesis} summarizes the complete synthesis procedure.

\begin{algorithm}[t]
\caption{Factory Synthesis with Three-Layer Validation}
\label{alg:synthesis}
\small
\textbf{Require:} Description $\delta$, expected ports $\Pi_{\mathrm{exp}}$, parameters $\theta_0$, LLM client $\mathcal L$, max rounds $K$\\
\textbf{Ensure:} Validated factory module $\mathcal M$ or error
\begin{algorithmic}[1]
\State $k_{\mathrm{cache}}\gets \mathrm{SHA256}(\delta\Vert\Pi_{\mathrm{exp}}\Vert\theta_0)$
\If{$k_{\mathrm{cache}}\in\mathrm{Cache}$}
    \State \Return $\mathrm{Cache}[k_{\mathrm{cache}}]$
\EndIf
\State $code\gets\mathcal L.\mathrm{Chat}(\mathrm{SynthPrompt}(\delta,\Pi_{\mathrm{exp}},\theta_0))$
\For{$k=1,\dots,K$}
    \State $report\gets\mathrm{Sandbox}(code,\theta_0)$
    \If{$report.passed$}
        \State $\mathrm{Cache}[k_{\mathrm{cache}}]\gets code$
        \State \Return $code$
    \EndIf
    \State $code\gets\mathcal L.\mathrm{Chat}(\mathrm{RepairPrompt}(code,report))$
\EndFor
\State \textbf{raise} $\mathrm{SynthesisError}(report)$
\end{algorithmic}
\end{algorithm}

The first two validation layers ensure successful execution and geometric validity. The final layer verifies the consistency between declared ports and the generated geometry, ensuring that interfaces are supported by explicit geometric evidence rather than symbolic annotations alone.

\begin{definition}[Port-Geometry Consistency]
\label{def:consistency}
A port
\[
\pi=(name,t,\mathbf{o},\hat{\mathbf n},\phi)
\]
on solid $S$ satisfies \emph{port-geometry consistency} if and only if the type-specific geometric evidence holds:
\begin{equation}
\mathrm{Consistent}(\pi,S)\triangleq
\begin{cases}
\mathrm{CavityAt}(\mathbf{o},\hat{\mathbf n},\phi_d,S),
& t=\texttt{bore},\\
\mathrm{FaceAt}(\mathbf{o},\hat{\mathbf n},S),
& t=\texttt{flat\_face},\\
\mathrm{MaterialAlong}(\mathbf{o},\hat{\mathbf n},\phi_L,S),
& t=\texttt{shaft\_seat}.
\end{cases}
\label{eq:port_consistency}
\end{equation}
\end{definition}

where $\mathrm{CavityAt}$ requires sample points at radial offset $\phi_d/4$ to be classified as \texttt{OUT} by \texttt{BRepClass3d\_SolidClassifier}, $\mathrm{FaceAt}$ requires a coincident face with collinear normal (within $10^{-3}$ rad), and $\mathrm{MaterialAlong}$ requires \texttt{IN} classification at $\pm\phi_L/2$ along $\hat{\mathbf n}$.

This definition elevates ports from symbolic annotations to verifiable geometric interfaces. For example, a bore port is valid only when a cylindrical cavity exists at the declared location, while a shaft-seat port requires material support along the specified axis. Such consistency guarantees that assembly interfaces are grounded in geometry rather than merely declared in the specification.

Consequently, generator synthesis and interface validation are tightly coupled. Every admitted component is guaranteed not only to produce valid geometry, but also to expose physically meaningful interfaces that are compatible with downstream assembly operations.

\subsection{Phase III: How to Verify}

The third phase determines \emph{whether the generated assembly is production-ready}. Given validated component factories and an Assembly Specification, \textsc{AssemCAD} constructs the final assembly and subsequently performs geometry-aware verification to ensure physical consistency. This phase is realized through the assembly library's deterministic assembly engine, which operationalizes the port-mate abstraction through closed-form mate transforms and multi-layer verification.

\paragraph{Deterministic Mate Transform.}
Unlike conventional CAD assembly approaches that rely on iterative constraint solvers or manually specified rigid transformations, the assembly library computes each mate as a closed-form rigid-body transform. Given a base port $\pi_b$ (already placed in world coordinates) and an incoming port $\pi_c$ (in the incoming part's local frame), the transform that places the incoming part is:
\begin{equation}
\label{eq:mate_transform}
T = L_b \cdot R_{\mathrm{flip}} \cdot R_\alpha \cdot L_c^{-1},
\end{equation}
where $L_b, L_c \in SE(3)$ are the respective port frames, $R_{\mathrm{flip}}$ is a $180^\circ$ rotation about the local $x$-axis (making $z$-axes anti-parallel for face-to-face contact), and $R_\alpha$ is an optional user-specified rotation about the shared $z$-axis. Each of the seven mate types (Table~\ref{tab:mate_types}) extends this base transform with type-specific semantics: for example, \texttt{gear\_mesh} validates matching module and pressure angle before computing the center distance, while \texttt{press\_fit} verifies the interference condition before delegating to coaxial alignment. This single matrix multiplication produces byte-identical results across runs, satisfying Proposition~\ref{prop:determinism}, and eliminates the non-determinism inherent in iterative solvers.

\paragraph{Assembly Construction and Verification.}
Algorithm~\ref{alg:assembly} summarizes the complete assembly and verification process. Starting from instantiated parts, the system iteratively applies mate constraints through port alignment using Eq.~(\ref{eq:mate_transform}) and computes relative transformations between components. Before each mate is applied, the library checks port-type compatibility via the predefined compatibility matrix (Definition~\ref{def:compat}), catching category errors (e.g., mating a fastener port to a gear mesh port) before geometric computation. Once the assembly is constructed, pairwise interference analysis and connectivity verification are performed to generate the final report. The output consists of a STEP model together with a verification report.

\begin{algorithm}[t]
\caption{Assembly Construction and Verification}
\label{alg:assembly}
\small
\textbf{Require:} Validated tree $\mathcal T=(\mathcal P,\mathcal M)$, clash threshold $\tau$\\
\textbf{Ensure:} Assembly $\mathcal A$, verification report $\mathcal V$
\begin{algorithmic}[1]
\State $\mathcal A\gets \textsc{AssemblyExt}()$
\For{$p_i\in\mathcal P$}
    \State $w_i\gets f_i.\textsc{Build}(\theta_i)$
    \State $\mathcal A.\textsc{Add}(w_i,\texttt{id}_i)$
\EndFor
\For{$m_j=(\tau_j,b_j,c_j,\omega_j)\in\mathcal M$}
    \State $T_j\gets \textsc{ComputeTransform}(\tau_j,\mathcal A.\textsc{Port}(b_j),\mathcal A.\textsc{Port}(c_j),\omega_j)$
    \State $\mathcal A.\textsc{Locate}(c_j.\texttt{ref},T_j)$
\EndFor
\Statex \textit{// Verification}
\For{$(p_i,p_j)\in\binom{\mathcal P}{2}$}
    \State $v_{ij}\gets\mathrm{Vol}(\textsc{BRepCommon}(\mathcal A[p_i],\mathcal A[p_j]))$
    \If{$v_{ij}>\tau$}
        \State classify via Eq.~(\ref{eq:clash})
    \EndIf
\EndFor
\State $\mathcal V_{\mathrm{dof}}\gets\mathrm{BFS}(\mathcal A,\mathcal M,p_1)$
\State \Return $(\mathcal A,\mathcal V)$
\end{algorithmic}
\end{algorithm}

During assembly, boolean operations may silently produce disconnected compounds when solids only touch at boundaries. To prevent such failures, \textsc{AssemCAD} employs a robust union operator, \textsc{SafeUnion}, which explicitly preserves connectivity and raises errors whenever a valid fusion cannot be established.

\begin{proposition}[\textsc{SafeUnion} Correctness]
\label{prop:safeunion}
Let $B$ and $P$ be two CadQuery solids with
\[
|\mathrm{Solids}(B)|=|\mathrm{Solids}(P)|=1.
\]
Then $\textsc{SafeUnion}(B,P,\varepsilon)$ either returns a solid $R$ satisfying
\[
|\mathrm{Solids}(R)|=1,
\]
or raises an error. In particular, it never silently returns a disconnected compound.
\end{proposition}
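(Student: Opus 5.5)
The argument is structural: it follows the control flow of \textsc{SafeUnion}, not the geometry of the boolean kernel. I would first fix the algorithm the statement refers to as a finite sequence of attempts. It begins with a plain fuse $R_0=\mathrm{Fuse}(B,P)$. If that does not yield a single solid, it retries with $R_1=\mathrm{Fuse}(B,P^{+\varepsilon})$, where $P^{+\varepsilon}$ is $P$ slightly overlapped or enlarged by the tolerance $\varepsilon$ so that boundary-touching contacts become volumetric overlaps. Each attempt is followed by a guard that counts $|\mathrm{Solids}(\cdot)|$, and the procedure ends with a terminal \texttt{raise} if no attempt passes. The proposition is then a postcondition of this program: every \texttt{return} is dominated by a check $|\mathrm{Solids}(R)|=1$, and every other control path ends in an error.

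The key steps, in order, are as follows.
\begin{enumerate}[leftmargin=*,nosep]
\item Verify the precondition $|\mathrm{Solids}(B)|=|\mathrm{Solids}(P)|=1$. This ensures the solid-count check is meaningful and excludes degenerate inputs that are already compounds, on which an error is raised immediately.
\item Show termination: the retry schedule has finitely many branches, each a single kernel call that either returns a shape or throws. A thrown kernel exception is caught and re-raised as the \textsc{SafeUnion} error, so it counts as the ``raises an error'' outcome.
\item Case-split on the outcome of each attempt $R_k$. If $|\mathrm{Solids}(R_k)|=1$, the algorithm returns $R_k$ and the first disjunct holds. If $|\mathrm{Solids}(R_k)|\neq 1$ (zero solids, or a compound of $\ge 2$), control passes to the next attempt, or after the last attempt to the \texttt{raise}.
\end{enumerate}
By induction on the attempt index, no return statement is reachable without passing the guard. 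The ``in particular'' clause then follows: a disconnected compound has at least two solids, so it fails the guard and can never be returned.

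The main obstacle is the gap between the kernel's notion of ``solid'' and geometric connectedness. A fuse could in principle return one \texttt{TopoDS\_Solid} with several disjoint shells, which would be a single solid in count but disconnected in fact. I would handle this first by strengthening the guard to also require a single outer shell, or by explicitly counting connected components of the shell adjacency. If the paper's guard only counts solids, I would state the result relative to the kernel's definition of a solid, which is exactly how the proposition is phrased via $|\mathrm{Solids}(R)|$.

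A secondary subtlety is that the $\varepsilon$-perturbation alters $P$. Correctness as stated concerns only the single-solid guarantee, not geometric fidelity, so this perturbation does not affect the proof. I would note it as a remark rather than an obligation.
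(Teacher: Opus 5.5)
Your proposal is correct and matches the paper's argument: the guarantee rests entirely on the explicit post-condition assertion $|\mathrm{Solids}(R)|=1$ that guards every return, with any violation raised as an error rather than propagated. The paper's cascade is shaped slightly differently from the one you posit. It dispatches among three strategies (standard union when $\mathrm{Vol}(B\cap P)>0$, glue-based fusion when the parts share a face, and $\varepsilon$-extension along the shortest-gap direction otherwise), then makes one final assertion instead of a guarded retry loop, but your control-flow argument applies unchanged.
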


\textit{Proof sketch.}
\textsc{SafeUnion} checks three cases sequentially. First, if
$\mathrm{Vol}(B\cap P)>0$,
standard union preserves connectivity. Second, if $B$ and $P$ share a common face, glue-based fusion is applied. Otherwise, one component is extended by a small tolerance $\varepsilon$ along the shortest-gap direction to create volumetric overlap before union. Finally, the post-condition
$|\mathrm{Solids}(R)|=1$
is explicitly asserted; any violation raises an error instead of propagating a disconnected compound.

Unlike conventional collision detection, interference analysis must be aware of assembly semantics. Certain mate types inherently require geometric overlap and therefore should not be classified as collisions. Specifically,
\[
\mathcal T_{\mathrm{contact}}
=
\{
\texttt{gear\_mesh},
\texttt{press\_fit},
\texttt{thread\_engage}
\}
\]
denotes mate types with intentional geometric interference. Consequently, clash classification is mate-type-aware:

\begin{equation}
\label{eq:clash}
\mathrm{Class}(p_i,p_j)=
\begin{cases}
\texttt{expected},
&
\exists m\in\mathcal M:
m\text{ connects }(p_i,p_j)
\land
\tau_m\in\mathcal T_{\mathrm{contact}},
\\[2pt]
\texttt{clash},
&
\text{otherwise}.
\end{cases}
\end{equation}

Therefore, geometric overlap does not necessarily indicate invalidity. Instead, interference is interpreted according to the semantics of the corresponding mate. This distinction enables \textsc{AssemCAD} to correctly handle assemblies involving gears, press fits, and threaded engagements, where contact and penetration are physically necessary.

Besides pairwise interference analysis, the verification stage also checks graph connectivity and degrees of freedom. Connectivity is verified by traversing the assembly graph, while remaining degrees of freedom are computed from the induced mate constraints. The resulting verification report records clashes, connectivity status, and mobility information, and is exported together with the final STEP model.

\begin{proposition}[Determinism]
\label{prop:determinism}
For any fixed input description $d$ and LLM response cache $\mathcal C$, the pipeline output $G(\mathcal S)$ is uniquely determined. Formally, if
\[
\mathcal C(q)=r
\]
for every query encountered during execution, then repeated invocations produce byte-identical STEP files and verification reports.
\end{proposition}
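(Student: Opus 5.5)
The argument is a composition argument. I would model the whole pipeline as a composition of stage maps. Under the cache hypothesis, the only source of nondeterminism is removed. The proof then shows that each remaining stage is a deterministic function of its inputs, and that the composition of deterministic functions is deterministic.

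\textbf{Step 1: reduce the LLM to a function.} Every LLM call occurs at one of three places: the decomposition call $\mathcal L(d)$ and the repair calls $\mathcal L(d,errors)$ in Algorithm~\ref{alg:decompose}, and the synthesis and repair prompts in Algorithm~\ref{alg:synthesis}. Each query $q$ is a string assembled deterministically from $d$ and from outputs already computed. Since $\mathcal C(q)=r$ for every query encountered, each call acts as a fixed function of its query.

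I would argue by induction on the execution trace, ordered by the sequence of queries. If the state before the $k$-th query is identical across two runs, then the $k$-th query is identical, so its response is identical, so the state after it is identical. Two further ingredients enter here:
\begin{itemize}
\item \textsc{ValidateDraft} and the sandbox report of the three-layer validation (Definition~\ref{def:consistency}) must be deterministic. They use point classification at fixed offsets and fixed tolerances.
\item The SHA256 cache key is a pure function of its inputs.
\end{itemize}
It follows that the specification $\mathcal S$ and every synthesized factory module are uniquely determined.

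\textbf{Step 2: determinism of realization.} Given $\mathcal S$, Algorithm~\ref{alg:assembly} does three things:
\begin{itemize}
\item It builds each part by calling $f_i.\textsc{Build}(\theta_i)$, where $f_i$ is fixed code and $\theta_i$ are fixed parameters.
\item It iterates over $\mathcal M$ in the fixed order of the specification.
\item It places parts by the closed-form transform of Eq.~(\ref{eq:mate_transform}), a finite product of fixed matrices whose entries are computed from the port frames.
\end{itemize}
Because the floating-point operations are executed in the same order, they yield identical IEEE results. \textsc{SafeUnion} (Proposition~\ref{prop:safeunion}) branches only on computed quantities, so it is deterministic as well. The same holds for the verification steps: pairwise clash checks over $\binom{\mathcal P}{2}$ in a fixed enumeration order, classification by Eq.~(\ref{eq:clash}), and BFS from the fixed root $p_1$ with a fixed neighbor ordering. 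The report is then serialized deterministically.

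\textbf{Main obstacle.} The hard step is byte-identical output from the geometry kernel and the STEP writer. Single-threaded OCCT boolean operations are deterministic given identical inputs. However, parallel modes, hash-table iteration order keyed by memory addresses, and STEP header timestamps can all break byte identity.

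I would handle this by stating explicit standing assumptions: a fixed software environment and version, OCCT run in sequential mode, sorted or ordered containers, and timestamps and author fields in the STEP header fixed to constants. Under these assumptions, each kernel call is a deterministic function. The proposition then follows by composing Steps 1 and 2. The determinism claimed here is therefore relative to the environment and configuration, not unconditional.
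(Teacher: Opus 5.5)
Your proposal is correct and is essentially the paper's argument made explicit: cached responses turn each LLM call into a fixed function of its query, the downstream stages (closed-form mate transforms, \textsc{SafeUnion}, mate-type-aware clash classification, BFS) are deterministic, and composition does the rest; the one point to tighten is that if the factory cache of Algorithm~\ref{alg:synthesis} persists across invocations, a first and a later run issue different query sequences, so your trace induction should carry the invariant that each cached module equals what the synthesis loop would compute for its key. Where the paper simply asserts that downstream computation is ``purely functional'' with ``no stochastic operations,'' you correctly surface the hypotheses this silently relies on (sequential OCCT, deterministic container ordering, fixed STEP header fields such as the timestamp), so your conditional version is the more defensible statement of the same result.
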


\textit{Proof sketch.}
The result follows directly from the content-addressed architecture of \textsc{AssemCAD}. LLM responses are cached by SHA256 hashes, generator synthesis is keyed by specification hashes, and all downstream computation is purely functional. Since assembly construction and verification contain no stochastic operations, identical inputs necessarily produce identical geometric realizations and identical reports.

\paragraph{Axiom Traceability.}
Beyond geometric verification, the axiom references attached to each mate during specification (Phase~I) are propagated through assembly construction to the final verification report. This creates an auditable chain from the output STEP file, through the assembly tree, back to the engineering axiom and ultimately to its textbook provenance. For example, if a clash is detected between two gears, the report not only identifies the interference volume but also references axiom G-02 (module compatibility), enabling engineers to trace whether the failure stems from a specification error or a geometric inconsistency. This axiom traceability distinguishes \textsc{AssemCAD} from conventional CAD pipelines, where verification results lack engineering semantics and are purely geometric in nature.

Consequently, \textsc{AssemCAD} transforms symbolic assembly specifications into reproducible CAD artifacts with explicit physical and engineering guarantees. The resulting STEP model and verification report together provide a production-ready representation suitable for downstream engineering and manufacturing workflows.
\section{Experiments}\label{sec:exper}

\subsection{Benchmark Construction}

Despite recent advances in natural-language CAD generation, there is currently no standardized evaluation protocol for production-ready CAD assembly generation. Existing CAD resources, including DeepCAD, Text2CAD, and Fusion360 Gallery~\cite{wu2021deepcad,khan2024text2cad,verma2018autodesk, willis2021fusion360}, are primarily designed for single-part modeling or data collection, and therefore do not evaluate assembly-level reasoning, inter-part relationships, or engineering correctness.
To address this gap, we introduce \textsc{AssemBench}, a benchmark built upon the Fusion360 Gallery assembly corpus. Instead of collecting a new dataset, \textsc{AssemBench} establishes a standardized evaluation protocol by curating mechanically meaningful assemblies and pairing them with unified natural-language assembly specifications generated through a semi-automatic annotation pipeline (Appendix~\ref{sec:benchmark_construction}).
The benchmark is constructed according to three principles: (1) \emph{engineering realism}, selecting functional mechanical assemblies with explicit engineering semantics; (2) \emph{structural complexity}, requiring multiple interacting components and non-trivial assembly relationships; and (3) \emph{parametric constructibility}, favoring assemblies that can be faithfully reconstructed using programmatic CAD operations.
As summarized in Table~\ref{tab:benchmark_stats}, \textsc{AssemBench} contains 120 assembly instances spanning diverse mechanical structures and varying levels of assembly complexity. It is specifically designed to evaluate the task formulation proposed in this work, requiring models to jointly synthesize components, recover assembly structures, and satisfy engineering constraints. Detailed benchmark construction, statistics, and representative examples are provided in Appendix~\ref{sec:benchmark_construction}.

\subsection{Evaluation Protocol}

We evaluate all methods under the two benchmark settings provided by \textsc{AssemBench}: \textbf{Text-to-CAD Assembly Generation} and \textbf{Image-Text-to-CAD Assembly Generation}. Since production-ready CAD assembly generation requires both executable geometry and correct assembly semantics, we evaluate each method using three complementary metrics: \textbf{Success Rate (SR)}, \textbf{Assembly Preservation Rate (APR)}, and \textbf{VLM-based Engineering Evaluation}.

\textbf{Chamfer Distance.} Although Chamfer Distance (CD) is widely adopted in prior 3D generation work, we deliberately exclude it from our primary evaluation. As we prove in Appendix~\ref{app:cd_unreliability}, CD is fundamentally unreliable for assembly evaluation: it lacks both \emph{assembly structure identifiability} (Proposition~\ref{prop:non_ident}) and \emph{assembly ranking consistency} (Theorem~\ref{thm:reversal}). In particular, a structurally correct assembly can receive a worse CD score than one missing functionally critical components (Eq.~\ref{eq:reversal}). We therefore adopt assembly-aware metrics that directly evaluate structural correctness and engineering quality.%

\textbf{Success Rate (SR).}
SR measures the percentage of benchmark instances that successfully generate executable CAD artifacts without runtime errors.
\textbf{Assembly Preservation Rate (APR).}
APR measures the percentage of executable CAD programs that preserve the intended assembly structure. Assembly preservation is evaluated using an LLM-based code judge, where predictions with scores of at least 4 (out of 5) are regarded as successful.
\textbf{VLM-based Engineering Evaluation.}
Beyond executability and assembly preservation, we evaluate engineering quality using a VLM-as-a-Judge. Given the input specification, the reference rendering, and the generated rendering, the VLM reports two scores: \textbf{Assembly}, measuring structural correctness, and \textbf{Overall}, measuring overall engineering quality.

\subsection{Main Results}
\begin{table*}[t]
\centering
\footnotesize
\setlength{\tabcolsep}{4.2pt}
\renewcommand{\arraystretch}{1.15}

\begin{tabular}{lcccc|lcccc}
\toprule

\multicolumn{5}{c|}{\textbf{Text-to-CAD Assembly Generation}}
&
\multicolumn{5}{c}{\textbf{Image-Text-to-CAD Assembly Generation}}
\\

\cmidrule(lr){1-5}
\cmidrule(lr){6-10}

\multirow{2}{*}{\textbf{Method}} &
\multirow{2}{*}{\textbf{SR}$\uparrow$} &
\multirow{2}{*}{\textbf{APR}$\uparrow$} &
\multicolumn{2}{c|}{\textbf{VLM Judge}} &
\multirow{2}{*}{\textbf{Method}} &
\multirow{2}{*}{\textbf{SR}$\uparrow$} &
\multirow{2}{*}{\textbf{APR}$\uparrow$} &
\multicolumn{2}{c}{\textbf{VLM Judge}}
\\
\cmidrule(lr){4-5}
\cmidrule(lr){9-10}  & & &
\textbf{Assembly}$\uparrow$ &
\textbf{Overall}$\uparrow$ & & & &
\textbf{Assembly}$\uparrow$ &
\textbf{Overall}$\uparrow$
\\

\midrule

Cadrille~\cite{kolodiazhnyi2025cadrille} & 65.00 & -- & -- & -- &
\modelicon{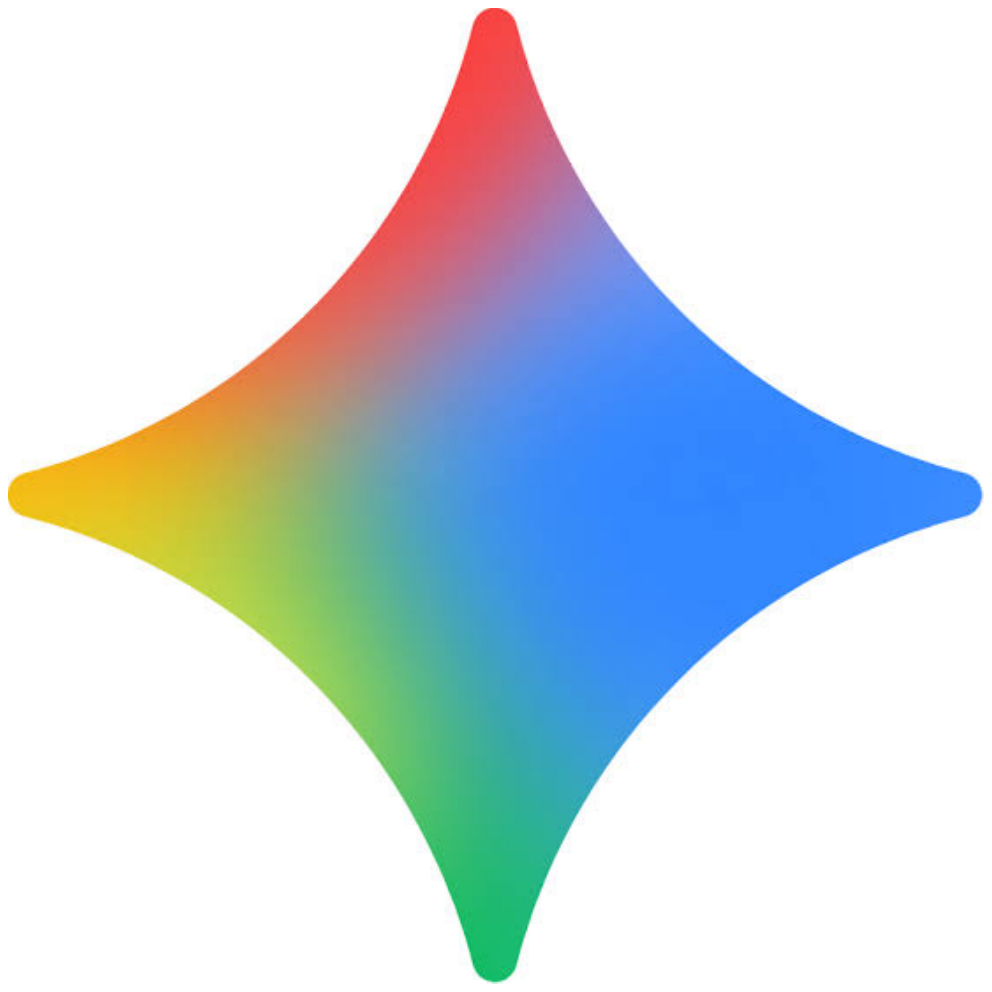}\hspace{2pt}Gemini 3.5 Flash & 30.83 & 0.00 & 3.21 & \textbf{5.41}
\\

\modelicon{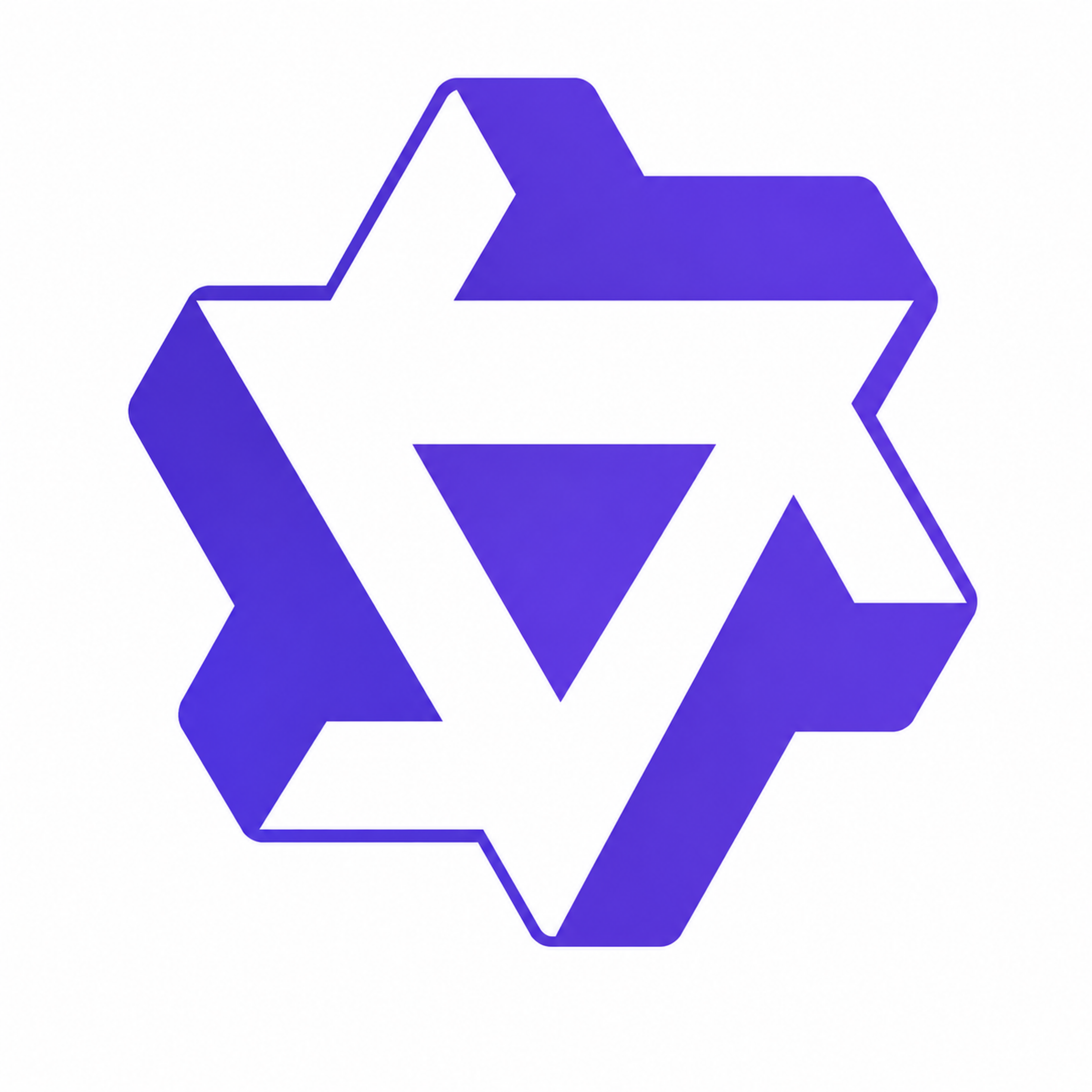}\hspace{2pt}Qwen3.7-Max & 78.33 & 39.17 & 2.88 & 4.53 &
\modelicon{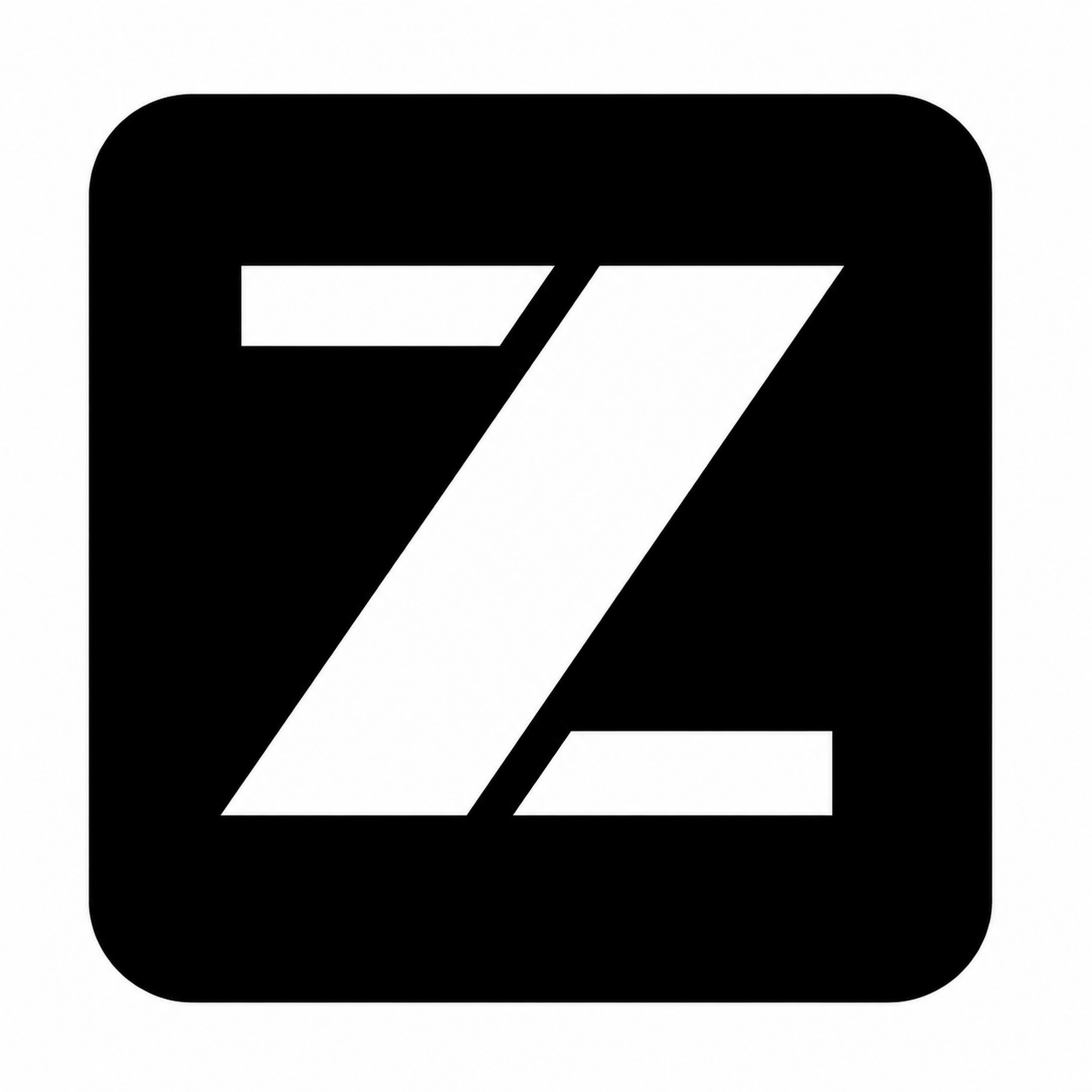}\hspace{2pt}GLM-5.2 & 76.67 & 40.83 & 2.65 & 4.11 
 \\

\modelicon{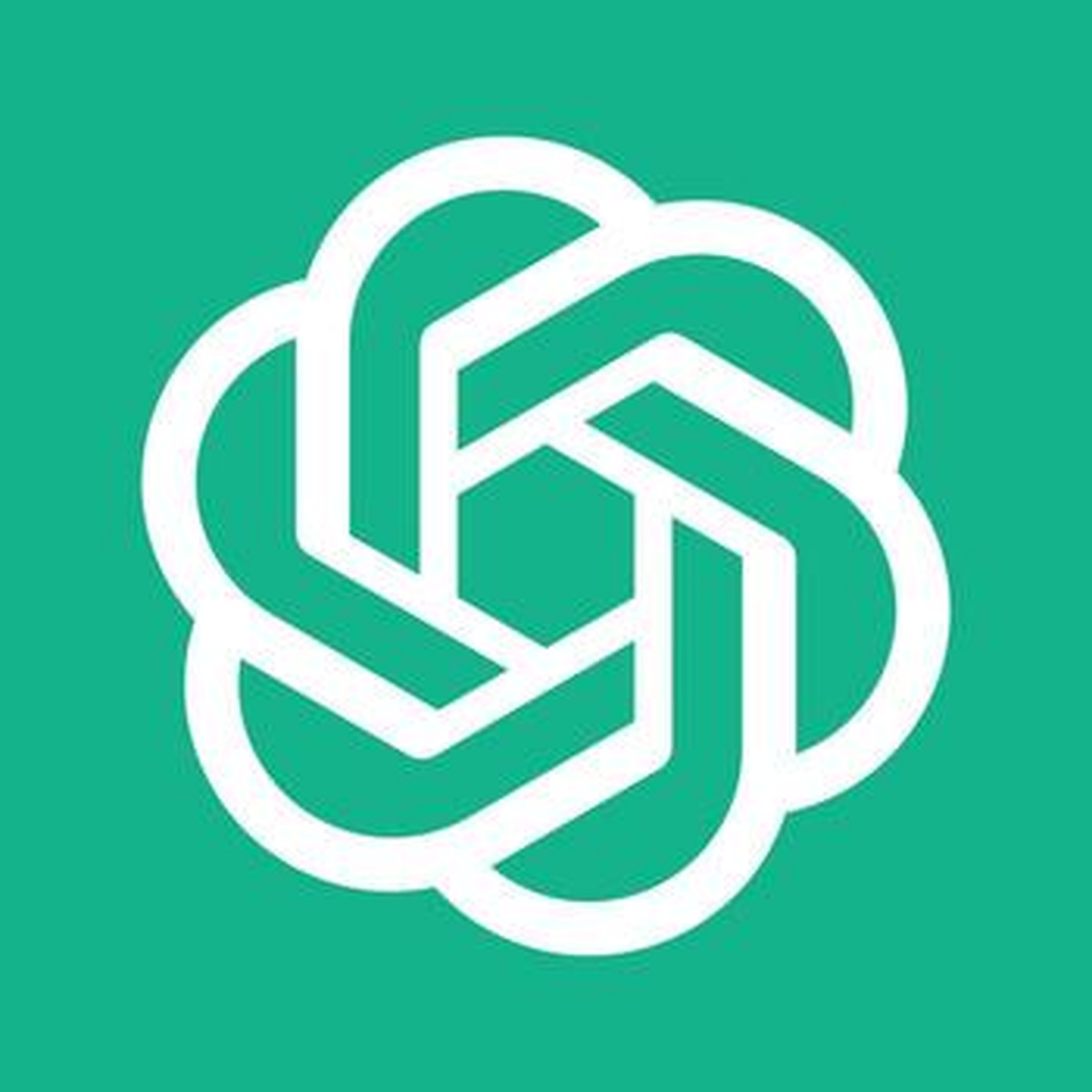}\hspace{2pt}GPT-5.5 & 79.17 & 66.67 & 2.96 & \textbf{4.73} &
\modelicon{images/logos/openai_logo.pdf}\hspace{2pt}GPT-5.5 & 79.17 & 64.17 & 3.17 & 5.20
\\

\modelicon{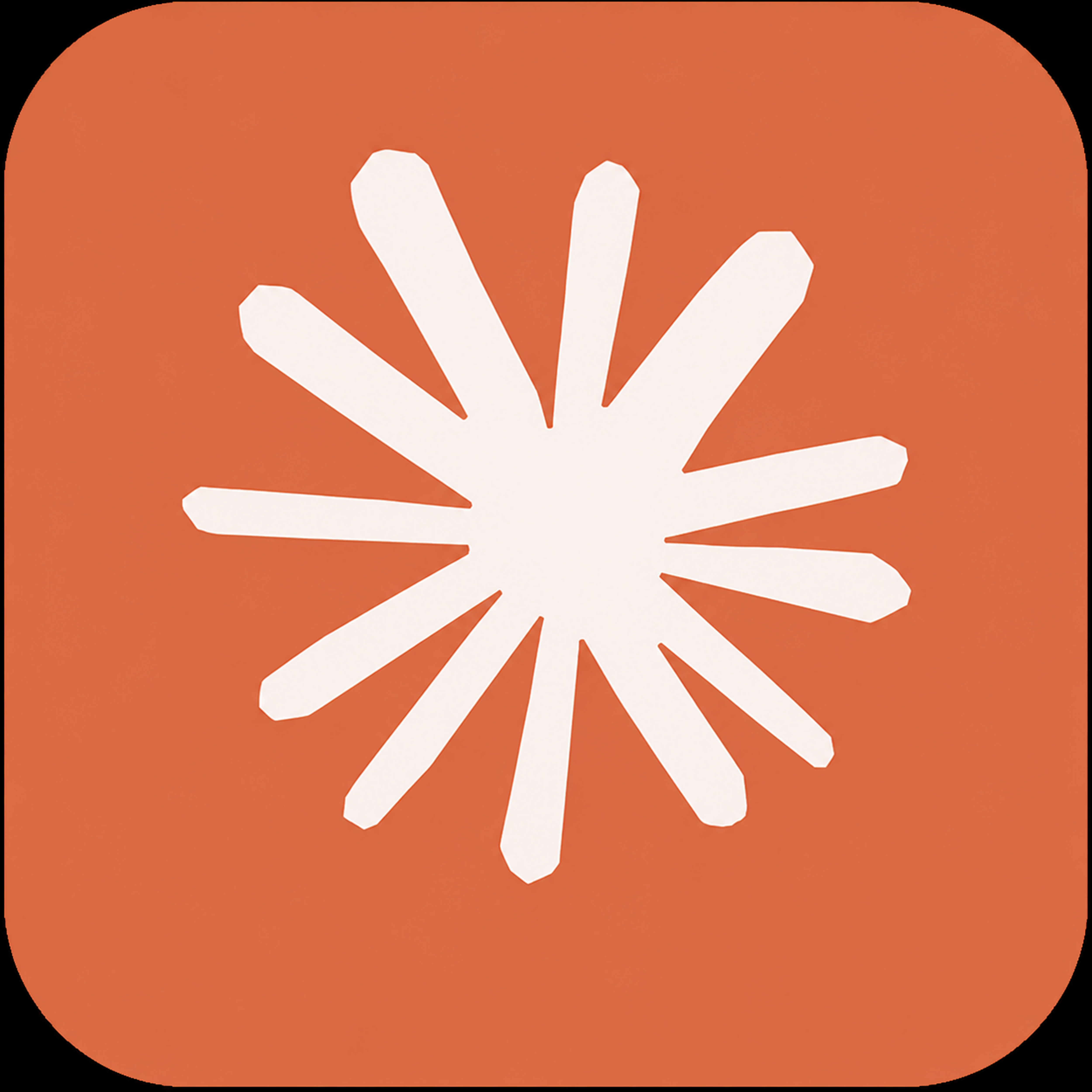}\hspace{2pt}Claude-4.8 & 85.00 & 0.83 & 2.70 & 4.37 &
\modelicon{images/logos/claude_logo.pdf}\hspace{2pt}Claude Opus 4.8 & 65.83 & 0.00 & 2.92 & 4.74  \\

\midrule

\rowcolor{gray!8}
\textbf{AssemCAD (Ours)} & \textbf{89.17} & \textbf{87.50} & \textbf{3.38} & 4.53 &
\textbf{AssemCAD (Ours)} & \textbf{84.17} & \textbf{82.50} & \textbf{3.48} & 4.77 \\

\bottomrule
\end{tabular}

\caption{
Comparison on \textsc{AssemBench} under two benchmark settings.
SR denotes \textbf{Success Rate}, APR denotes the proposed \textbf{Assembly Preservation Rate}, and the last two columns are evaluated using the proposed \textbf{VLM-as-a-Judge}. Higher is better for all metrics.
}
\label{tab:main_results}
\vspace{-3mm}
\end{table*}

Table~\ref{tab:main_results} summarizes the quantitative results on \textsc{AssemBench} under both text-only and image-only assembly generation settings. We compare \textsc{AssemCAD} with representative CAD generation methods and frontier foundation models. Overall, \textsc{AssemCAD} consistently achieves the highest Success Rate (SR) and Assembly Preservation Rate (APR), demonstrating its ability to reliably generate production-ready CAD assemblies while maintaining competitive engineering quality under the proposed VLM-as-a-Judge.

\paragraph{Text-to-CAD Assembly Generation.}
Under the text-only setting, frontier foundation models achieve relatively high execution success but exhibit a substantial drop in APR, indicating that many executable CAD programs fail to preserve the intended assembly structure. This SR--APR gap reveals a fundamental limitation of code-centric approaches: without explicit assembly semantics, executable programs may produce valid geometry while violating structural constraints such as port compatibility, mate consistency, or engineering axioms.

In contrast, \textsc{AssemCAD} achieves a SR of 89.17\% and an APR of 87.50\%, showing that nearly every successfully generated example also preserves valid assembly semantics. This near-perfect alignment between SR and APR is directly attributable to two core components. First, the port- and mate-based assembly library enforces structural correctness through typed port compatibility checks and deterministic mate transforms, ensuring that each assembly relation is geometrically realizable rather than merely syntactically valid. Second, the axiom-grounded specification requires every mate to reference explicit engineering principles, preventing semantically inconsistent assembly relations (e.g., mating a fastener to a gear mesh port) from entering the realization pipeline. Together, these mechanisms guarantee that successfully constructed assemblies are not only executable but also structurally and semantically faithful to the input specification. Furthermore, \textsc{AssemCAD} obtains the highest Assembly score, demonstrating superior structural correctness compared with both specialized CAD generation methods and general-purpose foundation models.

\paragraph{Image-Text-to-CAD Assembly Generation.}
Providing rendered images tends to improve visual and overall engineering judgments for several models. However, it does not consistently improve SR or APR, indicating that visual conditioning alone is insufficient for preserving assembly semantics. Nevertheless, preserving assembly semantics remains challenging, as reflected by the persistent gap between SR and APR. This gap suggests that visual perception can improve component-level geometry but provides limited guidance for assembly-level reasoning, where functional interfaces, mate compatibility, and engineering constraints must be explicitly modeled.

By explicitly modeling structured assembly specifications grounded in engineering axioms, \textsc{AssemCAD} again achieves the highest SR, APR, and Assembly score. The assembly library ensures that port-level interfaces are validated against concrete B-Rep geometry (Definition~\ref{def:consistency}), while axiom references attached to each mate provide engineering rationale that visual information alone cannot supply. This confirms that reliable assembly generation requires the combination of structured assembly representations and axiom-grounded reasoning, rather than relying solely on visual perception or code generation capabilities.

Although GPT-5.5 achieves the highest Overall score in the text-only setting and Gemini 3.5 Flash obtains the highest Overall score in the image-conditioned setting, these scores are computed only over successfully generated examples. Since \textsc{AssemCAD} successfully generates substantially more assemblies, it is evaluated on a considerably larger and more challenging set of instances, naturally making it more difficult to maintain the same average Overall score. Moreover, the assembly library's multi-layer verification pipeline (port-geometry consistency, clash detection, and connectivity analysis) admits assemblies that other methods would fail to produce, including structurally complex instances with intricate mate configurations. Nevertheless, \textsc{AssemCAD} remains highly competitive while consistently producing production-ready assemblies across the full benchmark.

\subsection{Framework Generalization}

\begin{wraptable}{r}{0.43\linewidth}
\vspace{-3mm}
\centering
\small
\setlength{\tabcolsep}{5pt}
\renewcommand{\arraystretch}{1.12}

\caption{Generalization across different foundation models.}
\label{tab:backbone}

\begin{tabular}{lccc}
\toprule
\textbf{Method}
&
\textbf{APR}$\uparrow$
&
\textbf{Assembly}$\uparrow$
&
\textbf{Overall}$\uparrow$
\\

\midrule

\modelicon{images/logos/claude_logo.pdf}\hspace{1pt}Claude-4.8
&
0.8
&
2.70
&
4.37
\\

\rowcolor{gray!10}
+ AssemCAD
&
\textbf{82.50}
&
\textbf{3.38}
&
\textbf{4.53}
\\

\midrule

\modelicon{images/logos/openai_logo.pdf}\hspace{1pt}GPT-5.5
&
64.2
&
3.17
&
\textbf{5.20}
\\

\rowcolor{gray!10}
+ AssemCAD
&
\textbf{71.67}
&
\textbf{3.58}
&
4.92
\\

\bottomrule
\end{tabular}

\vspace{-4mm}
\end{wraptable}

For Claude-4.8, AssemCAD dramatically improves both the APR and the Assembly Structure score under the text-only setting. Similarly, when combined with GPT-5.5, AssemCAD consistently improves assembly validity and structural quality under the image-conditioned setting.

This backbone-agnostic improvement can be attributed to the design of the assembly library and axiom system. The port- and mate-based library provides a deterministic realization layer that is independent of which foundation model generates the specification: typed port compatibility, deterministic mate transforms (Eq.~\ref{eq:mate_transform}), and port-geometry consistency checks (Definition~\ref{def:consistency}) operate identically regardless of the upstream reasoning model. Similarly, the 62 engineering axioms serve as a shared semantic contract that constrains the specification space, reducing the burden on the foundation model from generating correct assembly code to selecting appropriate components, ports, and axiom-grounded mates. Consequently, the framework is largely orthogonal to the underlying foundation model and can be readily integrated with different reasoning backbones.

\clearpage
{
\bibliographystyle{unsrt}  
\bibliography{preprint}

@article{briere2012comparing,
  title={Comparing 3D CAD models: uses, methods, tools and perspectives},
  author={Bri{\`e}re-C{\^o}t{\'e}, Antoine and Rivest, Louis and Maranzana, Roland},
  journal={Computer-Aided Design and Applications},
  volume={9},
  number={6},
  pages={771--794},
  year={2012},
  publisher={Taylor \& Francis}
}

@inproceedings{lambourne2021brepnet,
  title={Brepnet: A topological message passing system for solid models},
  author={Lambourne, Joseph G and Willis, Karl DD and Jayaraman, Pradeep Kumar and Sanghi, Aditya and Meltzer, Peter and Shayani, Hooman},
  booktitle={Proceedings of the IEEE/CVF conference on computer vision and pattern recognition},
  pages={12773--12782},
  year={2021}
}

@inproceedings{wu2021deepcad,
  title={Deepcad: A deep generative network for computer-aided design models},
  author={Wu, Rundi and Xiao, Chang and Zheng, Changxi},
  booktitle={Proceedings of the IEEE/CVF international conference on computer vision},
  pages={6772--6782},
  year={2021}
}

@book{verma2018autodesk,
  title={Autodesk fusion 360 black book},
  author={Verma, Gaurav},
  year={2018},
  publisher={BPB Publications}
}

@article{wright2024cadquery,
  title={CadQuery/cadquery: CadQuery 2.4. 0},
  author={Wright, Jeremy and Boyd, Marcus and {\"O}ZDERYA, Hasan Yavuz and Agostini, Bruno and Greminger, Michael and Fischer, Seth and Buchanan, Justin and S{\'a}nchez de Le{\'o}n Peque, Miguel and Budden, Martin and Boin, Peter and others},
  journal={Zenodo},
  year={2024}
}

@article{machado2019parametric,
  title={Parametric CAD modeling for open source scientific hardware: Comparing OpenSCAD and FreeCAD Python scripts},
  author={Machado, Felipe and Malpica, Norberto and Borromeo, Susana},
  journal={Plos one},
  volume={14},
  number={12},
  pages={e0225795},
  year={2019},
  publisher={Public Library of Science San Francisco, CA USA}
}

@article{xie2025text,
  title={Text-to-cadquery: A new paradigm for cad generation with scalable large model capabilities},
  author={Xie, Haoyang and Ju, Feng},
  journal={arXiv preprint arXiv:2505.06507},
  year={2025}
}

@inproceedings{li2024cad,
  title={Cad translator: An effective drive for text to 3d parametric computer-aided design generative modeling},
  author={Li, Xueyang and Song, Yu and Lou, Yunzhong and Zhou, Xiangdong},
  booktitle={Proceedings of the 32nd ACM International Conference on Multimedia},
  pages={8461--8470},
  year={2024}
}

@inproceedings{niu2026cme,
  title={Cme-cad: Heterogeneous collaborative multi-expert reinforcement learning for cad code generation},
  author={Niu, Ke and Yu, Haiyang and Chen, Zhuofan and Yao, Zhengtao and Jia, Weitao and Ge, Xiaodong and Tang, Jingqun and Cui, Benlei and Li, Bin and Xue, Xiangyang},
  booktitle={Proceedings of the IEEE/CVF Conference on Computer Vision and Pattern Recognition},
  pages={39272--39281},
  year={2026}
}

@article{guan2026cad,
  title={Cad-coder: Text-to-cad generation with chain-of-thought and geometric reward},
  author={Guan, Yandong and Wang, Xilin and Xing, Ximing and Zhang, Jing and Xu, Dong and Yu, Qian},
  journal={Advances in Neural Information Processing Systems},
  volume={38},
  pages={59765--59789},
  year={2026}
}

@article{kolodiazhnyi2025cadrille,
  title={cadrille: Multi-modal cad reconstruction with online reinforcement learning},
  author={Kolodiazhnyi, Maksim and Tarasov, Denis and Zhemchuzhnikov, Dmitrii and Nikulin, Alexander and Zisman, Ilya and Vorontsova, Anna and Konushin, Anton and Kurenkov, Vladislav and Rukhovich, Danila},
  journal={arXiv preprint arXiv:2505.22914},
  year={2025}
}

@inproceedings{liu2024point2cad,
  title={Point2cad: Reverse engineering cad models from 3d point clouds},
  author={Liu, Yujia and Obukhov, Anton and Wegner, Jan Dirk and Schindler, Konrad},
  booktitle={Proceedings of the IEEE/CVF conference on computer vision and pattern recognition},
  pages={3763--3772},
  year={2024}
}

@article{hu2026itercad,
  title={IterCAD: An Iterative Multimodal Agent for Visually-Grounded CAD Generation and Editing},
  author={Hu, Tao and Ai, Jiaxin and Wen, Licheng and Li, Xueheng and Zou, Shu and Li, Siqi and Deng, Nianchen and Cai, Xinyu and Zhou, Hongbin and Cai, Pinlong and others},
  journal={arXiv preprint arXiv:2606.13368},
  year={2026}
}

@article{ai2026comact,
  title={ComAct: Reframing Professional Software Manipulation via COM-as-Action Paradigm},
  author={Ai, Jiaxin and Hu, Tao and Yang, Xuemeng and Zou, Shu and Zhang, Hairong and Fu, Daocheng and Yang, Yu and Zhou, Hongbin and Deng, Nianchen and Cai, Pinlong and others},
  journal={arXiv preprint arXiv:2606.13239},
  year={2026}
}

@article{khan2024text2cad,
  title={Text2cad: Generating sequential cad designs from beginner-to-expert level text prompts},
  author={Khan, Mohammad S and Sinha, Sankalp and Sheikh, Talha U and Stricker, Didier and Ali, Sk A and Afzal, Muhammad Z},
  journal={Advances in Neural Information Processing Systems},
  volume={37},
  pages={7552--7579},
  year={2024}
}

@misc{claudecode2025,
  author       = {{Anthropic}},
  title        = {Claude Code},
  year         = {2025},
  howpublished = {\url{https://docs.anthropic.com/en/docs/claude-code}},
  note         = {Accessed: 2026-06-30}
}

@misc{openai2025codex,
  author = {{OpenAI}},
  title = {Introducing Codex},
  year = {2025},
  howpublished = {\url{https://openai.com/index/introducing-codex/}},
  note = {Accessed: 2026-06-30}
}

@article{gemini35,
  title={Gemini 2.5: Pushing the Frontier with Advanced Reasoning and Multimodality},
  author={Comanici, George and others},
  journal={arXiv preprint arXiv:2507.06261},
  year={2025}
}

@article{glm2025glm45,
  title={GLM-4.5: Agentic, Reasoning, and Coding (ARC) Foundation Models},
  author={{GLM-4.5 Team}},
  journal={arXiv preprint arXiv:2508.06471},
  year={2025}
}

@misc{qwen37,
  author = {{Alibaba Qwen Team}},
  title = {Qwen3.7},
  year = {2026},
  howpublished = {\url{https://chat.qwen.ai/}}
}

@inproceedings{wang2025cad,
  title={CAD-GPT: Synthesising CAD construction sequence with spatial reasoning-enhanced multimodal LLMs},
  author={Wang, Siyu and Chen, Cailian and Le, Xinyi and Xu, Qimin and Xu, Lei and Zhang, Yanzhou and Yang, Jie},
  booktitle={Proceedings of the AAAI Conference on Artificial Intelligence},
  volume={39},
  number={8},
  pages={7880--7888},
  year={2025}
}

@article{xu2024cad,
  title={Cad-mllm: Unifying multimodality-conditioned cad generation with mllm},
  author={Xu, Jingwei and Wang, Chenyu and Zhao, Zibo and Liu, Wen and Ma, Yi and Gao, Shenghua},
  journal={arXiv preprint arXiv:2411.04954},
  year={2024}
}

@inproceedings{rukhovich2025cad,
  title={Cad-recode: Reverse engineering cad code from point clouds},
  author={Rukhovich, Danila and Dupont, Elona and Mallis, Dimitrios and Cherenkova, Kseniya and Kacem, Anis and Aouada, Djamila},
  booktitle={Proceedings of the IEEE/CVF International Conference on Computer Vision},
  pages={9801--9811},
  year={2025}
}

@INPROCEEDINGS{Jay2021uvnet,
  author={Jayaraman, Pradeep Kumar and Sanghi, Aditya and Lambourne, Joseph G. and Willis, Karl D. D. and Davies, Thomas and Shayani, Hooman and Morris, Nigel},
  booktitle={2021 IEEE/CVF Conference on Computer Vision and Pattern Recognition (CVPR)}, 
  title={UV-Net: Learning from Boundary Representations}, 
  year={2021},
  volume={},
  number={},
  pages={11698-11707},
  doi={10.1109/CVPR46437.2021.01153}
}

@misc{jayaraman2023solidgenautoregressivemodeldirect,
      title={SolidGen: An Autoregressive Model for Direct B-rep Synthesis}, 
      author={Pradeep Kumar Jayaraman and Joseph G. Lambourne and Nishkrit Desai and Karl D. D. Willis and Aditya Sanghi and Nigel J. W. Morris},
      year={2023},
      eprint={2203.13944},
      archivePrefix={arXiv},
      primaryClass={cs.LG},
      url={https://arxiv.org/abs/2203.13944}, 
}

@article{xu2024brepgenbrepgenerativediffusion,
author = {Xu, Xiang and Lambourne, Joseph and Jayaraman, Pradeep and Wang, Zhengqing and Willis, Karl and Furukawa, Yasutaka},
title = {BrepGen: A B-rep Generative Diffusion Model with Structured Latent Geometry},
year = {2024},
issue_date = {July 2024},
publisher = {Association for Computing Machinery},
address = {New York, NY, USA},
volume = {43},
number = {4},
issn = {0730-0301},
url = {https://doi.org/10.1145/3658129},
doi = {10.1145/3658129},
journal = {ACM Trans. Graph.},
month = jul,
articleno = {119},
numpages = {14}
}

@INPROCEEDINGS{Willis2022Join,
  author={Willis, Karl D.D. and Jayaraman, Pradeep Kumar and Chu, Hang and Tian, Yunsheng and Li, Yifei and Grandi, Daniele and Sanghi, Aditya and Tran, Linh and Lambourne, Joseph G. and Solar-Lezama, Armando and Matusik, Wojciech},
  booktitle={2022 IEEE/CVF Conference on Computer Vision and Pattern Recognition (CVPR)}, 
  title={JoinABLe: Learning Bottom-up Assembly of Parametric CAD Joints}, 
  year={2022},
  volume={},
  number={},
  pages={15828-15839},
  doi={10.1109/CVPR52688.2022.01539}
}

@misc{noeckel2023mates2motionlearningmechanicalcad,
      title={Mates2Motion: Learning How Mechanical CAD Assemblies Work}, 
      author={James Noeckel and Benjamin T. Jones and Karl Willis and Brian Curless and Adriana Schulz},
      year={2023},
      eprint={2208.01779},
      archivePrefix={arXiv},
      primaryClass={cs.CV},
      url={https://arxiv.org/abs/2208.01779}, 
}

@article{jones2021automatedatasetlearningapproach,
author = {Jones, Benjamin and Hildreth, Dalton and Chen, Duowen and Baran, Ilya and Kim, Vladimir G. and Schulz, Adriana},
title = {AutoMate: a dataset and learning approach for automatic mating of CAD assemblies},
year = {2021},
issue_date = {December 2021},
publisher = {Association for Computing Machinery},
address = {New York, NY, USA},
volume = {40},
number = {6},
issn = {0730-0301},
url = {https://doi.org/10.1145/3478513.3480562},
doi = {10.1145/3478513.3480562},
journal = {ACM Trans. Graph.},
month = dec,
articleno = {227},
numpages = {18}
}

@article{ANANTHA1996707,
title = {Assembly modelling by geometric constraint satisfaction},
journal = {Computer-Aided Design},
volume = {28},
number = {9},
pages = {707-722},
year = {1996},
issn = {0010-4485},
doi = {https://doi.org/10.1016/0010-4485(96)00001-2},
url = {https://www.sciencedirect.com/science/article/pii/0010448596000012},
author = {Ram Anantha and Glenn A Kramer and Richard H Crawford}
}

@article{tian2022assemble,
place = {Country unknown/Code not available}, title = {Assemble Them All: Physics-Based Planning for Generalizable Assembly by Disassembly}, url = {https://par.nsf.gov/biblio/10419453}, DOI = {10.1145/3550454.3555525}, abstractNote = {Assembly planning is the core of automating product assembly, maintenance, and recycling for modern industrial manufacturing. Despite its importance and long history of research, planning for mechanical assemblies when given the final assembled state remains a challenging problem. This is due to the complexity of dealing with arbitrary 3D shapes and the highly constrained motion required for real-world assemblies. In this work, we propose a novel method to efficiently plan physically plausible assembly motion and sequences for real-world assemblies. Our method leverages the assembly-by-disassembly principle and physics-based simulation to efficiently explore a reduced search space. To evaluate the generality of our method, we define a large-scale dataset consisting of thousands of physically valid industrial assemblies with a variety of assembly motions required. Our experiments on this new benchmark demonstrate we achieve a state-of-the-art success rate and the highest computational efficiency compared to other baseline algorithms. Our method also generalizes to rotational assemblies (e.g., screws and puzzles) and solves 80-part assemblies within several minutes.}, journal = {ACM Transactions on Graphics}, volume = {41}, number = {6}, author = {Tian, Yunsheng and Xu, Jie and Li, Yichen and Luo, Jieliang and Sueda, Shinjiro and Li, Hui and Willis, Karl D. and Matusik, Wojciech}, }

@INPROCEEDINGS{tian2023asap,
  author={Tian, Yunsheng and Willis, Karl D. D. and Al Omari, Bassel and Luo, Jieliang and Ma, Pingchuan and Li, Yichen and Javid, Farhad and Gu, Edward and Jacob, Joshua and Sueda, Shinjiro and Li, Hui and Chitta, Sachin and Matusik, Wojciech},
  booktitle={2024 IEEE International Conference on Robotics and Automation (ICRA)}, 
  title={ASAP: Automated Sequence Planning for Complex Robotic Assembly with Physical Feasibility}, 
  year={2024},
  volume={},
  number={},
  pages={4380-4386},
  doi={10.1109/ICRA57147.2024.10611595}}

@article{willis2021fusion360,
author = {Willis, Karl D. D. and Pu, Yewen and Luo, Jieliang and Chu, Hang and Du, Tao and Lambourne, Joseph G. and Solar-Lezama, Armando and Matusik, Wojciech},
title = {Fusion 360 gallery: a dataset and environment for programmatic CAD construction from human design sequences},
year = {2021},
issue_date = {August 2021},
publisher = {Association for Computing Machinery},
address = {New York, NY, USA},
volume = {40},
number = {4},
issn = {0730-0301},
url = {https://doi.org/10.1145/3450626.3459818},
doi = {10.1145/3450626.3459818},
journal = {ACM Trans. Graph.},
month = jul,
articleno = {54},
numpages = {24}
}

@inproceedings{mallis2025cad,
  title={CAD-assistant: tool-augmented vllms as generic cad task solvers},
  author={Mallis, Dimitrios and Karadeniz, Ahmet Serda and Cavada, Sebastian and Rukhovich, Danila and Foteinopoulou, Niki and Cherenkova, Kseniya and Kacem, Anis and Aouada, Djamila},
  booktitle={Proceedings of the IEEE/CVF International Conference on Computer Vision},
  pages={7284--7294},
  year={2025}
}

@article{barkley2026cadsmith,
  title={Cadsmith: Multi-agent cad generation with programmatic geometric validation},
  author={Barkley, Jesse and Loghmani, Rumi and Farimani, Amir Barati},
  journal={arXiv preprint arXiv:2603.26512},
  year={2026}
}

@inproceedings{jones2025solver,
  title={A Solver-Aided Hierarchical Language for LLM-Driven CAD Design},
  author={Jones, Benjamin T and Zhang, Zihan and H{\"a}hnlein, Felix and Matusik, Wojciech and Ahmad, Maaz and Kim, Vladimir and Schulz, Adriana},
  booktitle={Computer Graphics Forum},
  volume={44},
  number={7},
  pages={e70250},
  year={2025},
  organization={Wiley Online Library}
}

@inproceedings{xu2022skexgen,
  title={SkexGen: Autoregressive Generation of CAD Construction Sequences with Disentangled Codebooks},
  author={Xu, Xiang and Willis, Karl DD and Lambourne, Joseph G and Cheng, Chin-Yi and Jayaraman, Pradeep Kumar and Furukawa, Yasutaka},
  booktitle={International Conference on Machine Learning},
  pages={24698--24724},
  year={2022},
  organization={PMLR}
}

@inproceedings{xu2023hierarchical,
  title={Hierarchical Neural Coding for Controllable CAD Model Generation},
  author={Xu, Xiang and Jayaraman, Pradeep Kumar and Lambourne, Joseph George and Willis, Karl DD and Furukawa, Yasutaka},
  booktitle={International Conference on Machine Learning},
  pages={38443--38461},
  year={2023},
  organization={PMLR}
}

@article{li2022free2cad,
author = {Li, Changjian and Pan, Hao and Bousseau, Adrien and Mitra, Niloy J.},
title = {Free2CAD: parsing freehand drawings into CAD commands},
year = {2022},
issue_date = {July 2022},
publisher = {Association for Computing Machinery},
address = {New York, NY, USA},
volume = {41},
number = {4},
issn = {0730-0301},
url = {https://doi.org/10.1145/3528223.3530133},
doi = {10.1145/3528223.3530133},
journal = {ACM Trans. Graph.},
month = jul,
articleno = {93},
numpages = {16}
}

@article{guo2022complexgen,
  title={Complexgen: Cad reconstruction by b-rep chain complex generation},
  author={Guo, Haoxiang and Liu, Shilin and Pan, Hao and Liu, Yang and Tong, Xin and Guo, Baining},
  journal={ACM Transactions on Graphics (TOG)},
  volume={41},
  number={4},
  pages={1--18},
  year={2022},
  publisher={ACM New York, NY, USA}
}

@article{zou2022review,
  title={A review on geometric constraint solving},
  author={Zou, Qiang and Tang, Zhihong and Feng, Hsi-Yung and Gao, Shuming and Zhou, Chenchu and Liu, Yusheng},
  journal={arXiv preprint arXiv:2202.13795},
  year={2022}
}
}

\clearpage
\newgeometry{
  textheight=9in, textwidth=5.5in, top=1in,
  headheight=12pt, headsep=25pt, footskip=30pt
}

\newpage
\appendix
\renewcommand{\thesection}{\Alph{section}}
\setcounter{section}{0}

\noindent{\LARGE\textbf{Appendix}\par}\normalsize

\vspace{10pt}
{\large \textbf{Contents}}
\startcontents[appendices]
\printcontents[appendices]{l}{1}{\setcounter{tocdepth}{3}}

\section{Engineering Axiom Set}
\label{app:axioms}

Our axiom set $\mathcal{A}$ contains 62 axioms: 41 anchored to a raw pool of 139 textbook-extracted candidates, plus 21 synthesized to fill coverage gaps for component families present in the CAD libraries but absent from the source corpus. The axioms follow the MECE principle: mutually exclusive on scope (no two axioms cover the same design decision) and collectively exhaustive over the CadQuery/cq\_warehouse/cq\_gears expressible space. This section describes the three-phase extraction pipeline and lists representative axioms.

\subsection{Axiom Extraction Pipeline}

The axiom set is produced through a three-phase LLM-assisted knowledge distillation pipeline that transforms multilingual assembly textbooks into a versioned, auditable set of engineering axioms.

\paragraph{Phase~1: Two-Stage LLM Extraction.}
The source corpus consists of MinerU-digitized assembly textbooks (Chinese and English). The corpus is recursively segmented into section-scoped chunks by heading level. Two sequential LLM stages then process each chunk:

\begin{itemize}[nosep,leftmargin=*]
\item \textbf{Stage~A (Recall-focused Knowledge Unit Extraction).}
For each chunk, an LLM extracts every atomic assembly-related claim as a \emph{knowledge unit} (KU), classified by claim type (topology, sequence, geometry, process, material specification, safety). The prompt explicitly prioritizes recall over precision: claims are extracted verbatim without filtering or generalization. This stage produced 210 chunks yielding the initial KU pool.

\item \textbf{Stage~B (Precision-focused Axiom Canonicalization).}
Each KU is individually assessed against strict axiom criteria: it must be (a)~topological or sequential in nature, (b)~component-class-scoped rather than product-specific, (c)~free of numerical specifications, and (d)~universally true rather than a recommendation. Accepted KUs are canonicalized into present-tense active voice, assigned to one of six categories (\texttt{fastener}, \texttt{gear}, \texttt{bearing}, \texttt{base\_frame}, \texttt{general\_sequence}, \texttt{general\_topology}), tagged with controlled-vocabulary scope labels, and given a confidence score. Rejected KUs are logged with structured rejection reasons (\texttt{not\_universal}, \texttt{numerical\_spec}, \texttt{not\_assembly}, \texttt{too\_vague}, \texttt{product\_specific}, \texttt{process\_only}).
\end{itemize}

The two-stage split enables localized failure analysis: a missing axiom is traceable to either a Stage-A recall miss or a Stage-B precision reject. The pipeline uses deterministic settings (seed=42, temperature=0.0) for reproducibility, and a schema validator enforces a controlled vocabulary of part types, feature types, relation types, and sequence constraints. This phase produced 139 raw candidate axioms.

\paragraph{Phase~2: LLM Review and Filtering.}
A second LLM pass reduces the raw pool through: (1)~clustering by deduplicated content key, (2)~per-cluster keep/merge/drop decisions, and (3)~cross-cluster semantic deduplication using embeddings. Six explicit exclusion rules are applied: axioms that merely restate geometric definitions, contain numerical specifications, are product-catalog-specific, are historical in nature, duplicate a sibling, or contradict a stronger sibling are dropped. Surviving axioms are reworded into canonical MUST/MUST NOT/SHOULD phrasing.

\paragraph{Phase~3: MECE Finalization.}
The final pass is an interactive audit enforcing four properties:

\begin{enumerate}[nosep,leftmargin=*]
\item \textbf{Mutual exclusivity}: every pair of axioms within each scope is checked for overlap.
\item \textbf{Universality}: each axiom is validated against demonstrator assembly examples.
\item \textbf{Coverage}: every port type and component class in the CAD extension library (Appendix~\ref{app:cad_assembly_ext}) must be covered by at least one axiom.
\item \textbf{DAG consistency}: precedes/follows relationships among axioms must form a directed acyclic graph.
\end{enumerate}

Where the textbook corpus lacked coverage for component families present in the CAD libraries (e.g., certain bearing configurations), \emph{synthesized axioms} are created with explicit provenance markers and engineering justification.

The disposition of the 139 raw axioms was: 37 kept directly, 16 merged (subsumed by existing axioms), 60 dropped as manufacturing-specific (out of scope), 19 dropped as overly narrow, and 7 dropped as trivial or tautological. An additional 4 axioms were broadened beyond their raw sources (counted as anchored but with synthesized generalizations), yielding 41 raw-anchored axioms. Finally, 21 axioms were synthesized to cover component families present in the CAD libraries but absent from the textbook corpus (e.g., washers, setscrews, keyways, helical/bevel/worm gears, sprockets, pulleys, couplings, plain bearings), each marked with explicit provenance and engineering justification. The final set contains $41 + 21 = 62$ axioms.

\subsection{Axiom Usage in the Pipeline}

Each axiom serves as a formal justification throughout the pipeline. In the Assembly Specification (Definition~\ref{def:spec}), every mate $m_j \in \mathcal{M}$ references one or more axioms via its $\mathcal{A}$ field. These references are: (1)~attached during semantic decomposition (Algorithm~\ref{alg:decompose}), (2)~used to guide component synthesis through provenance-aware prompts (Algorithm~\ref{alg:synthesis}), and (3)~propagated to the final verification report, creating an auditable chain from the output STEP file back through the assembly tree to the engineering principle and ultimately to the textbook source.

\subsection{Example Axioms}

Table~\ref{tab:example_axioms} lists representative axioms from the final set, illustrating the range of engineering knowledge encoded across different categories. Each axiom is stated in canonical MUST/MUST NOT phrasing and is scoped to a specific component class or assembly relation.

\begin{longtable}{cp{10cm}}
\caption{Representative engineering axioms from the final axiom set.}
\label{tab:example_axioms}\\
\toprule
\textbf{ID} & \textbf{Axiom Statement} \\
\midrule
\endhead
F-01 & An assembly is a set of parts related by pairwise connections; every connection between two parts removes one or more degrees of freedom from their relative motion. \\
F-03 & Every pairwise connection is expressed as one or more geometric constraints between named features (faces, edges, axes, or points) of the two parts. \\
C-02 & A coaxial constraint between two cylindrical axes makes them collinear, removing two translational and two rotational DOF and leaving one rotational and one translational DOF along the shared axis. \\
K-01 & A revolute joint constrains two parts to a single rotational DOF about a fixed axis. \\
G-02 & A spur-gear pair has parallel axes and cylindrical pitch surfaces; its teeth are straight and parallel to the gear axis. \\
T-01 & A threaded fastener creates a tensile preload along its shank that clamps the joined parts together in compression between the fastener's bearing surfaces. \\
\bottomrule
\end{longtable}

\section{CAD Assembly Extension Library}
\label{app:cad_assembly_ext}

The assembly realization pipeline described in Section~\ref{sec:method} relies on a custom CadQuery extension library, \texttt{cad-assembly-ext}, that introduces a \emph{port-based mating system} for deterministic mechanical assembly. This appendix details its architecture, core abstractions, and the design decisions motivated by the requirements of LLM-driven CAD generation.

\subsection{Motivation}

Vanilla CadQuery provides only raw \texttt{cq.Location}-based placement for multi-part assemblies, offering no semantic mating, no type safety, and no validation layer. This is insufficient for an LLM-driven pipeline for three reasons: (1)~LLMs frequently produce structurally plausible but geometrically incorrect assemblies (e.g., bolts mated to gear meshes), requiring compile-time type checking on assembly operations; (2)~boolean operations in the OpenCascade kernel silently produce disconnected compounds when solids share only boundary contact, a failure mode that LLM-generated code encounters systematically; and (3)~LLM-synthesized components may contain \emph{phantom features}---ports declared at locations where the corresponding geometric operation (e.g., a bore cut) silently failed---requiring geometric evidence verification beyond symbolic annotations.

\subsection{Architecture}

The library follows a three-phase data flow that mirrors the assembly realization pipeline:

\paragraph{Modeling Phase.}
Parts are constructed using standard CadQuery operations augmented with nine extension methods (e.g., \texttt{cbore\_hole\_with\_port}, \texttt{shaft\_port}, \texttt{bore\_port}) that atomically create geometry \emph{and} register a typed port annotation in a single call. This co-location prevents geometry--metadata drift.

\paragraph{Assembly Phase.}
\texttt{AssemblyExt.add()} extracts ports from each part's context. The \texttt{.mate()} method resolves port references (string format \texttt{"part\_name:port\_name"}), checks type compatibility via a predefined compatibility matrix, and dispatches to one of seven mate-type functions that each compute a closed-form rigid-body transform.

\paragraph{Export Phase.}
\texttt{.build()} produces a standard \texttt{cq.Assembly} object; \texttt{.export\_step()} writes STEP files with preserved assembly hierarchy.

\subsection{Port Abstraction}

The port abstraction is the central data structure that enables typed, semantically meaningful assembly interfaces. Each port encodes not only a spatial coordinate frame but also a type label and type-specific parameters, allowing the assembly engine to enforce compatibility constraints before geometric computation.

A \emph{port} is a typed geometric attachment point:
\[
\pi = (\textit{name},\; t \in \mathcal{T}_{\mathrm{port}},\; L \in SE(3),\; \phi),
\]
where $t$ is one of 12 port types (\texttt{BOLT\_SEAT}, \texttt{SHAFT}, \texttt{BORE}, \texttt{GEAR\_MESH}, \texttt{THREAD\_INTERNAL}, \texttt{THREAD\_EXTERNAL}, \texttt{PRESS\_FIT}, \texttt{FLAT\_FACE}, \texttt{CLEARANCE\_HOLE}, \texttt{CBORE\_SEAT}, \texttt{HOLE}, \texttt{SNAP\_FIT}), $L$ is the port's coordinate frame, and $\phi$ contains type-specific parameters (e.g., diameter, depth, thread specification).

A compatibility matrix $\mathcal{C} \subseteq \mathcal{T}_{\mathrm{port}} \times \mathcal{T}_{\mathrm{port}}$ defines which port-type pairs may be mated, providing semantic guard rails that catch category errors (e.g., attempting to mate a fastener with a gear mesh) before geometric computation.

\subsection{Deterministic Mate Transform}

The central mathematical operation is a closed-form rigid-body transform, as given in Eq.~(\ref{eq:mate_transform}) of the main text. Given a base port $\pi_b$ (already placed in world coordinates) and an incoming port $\pi_c$ (in the incoming part's local frame), the transform that places the incoming part is
$T = L_b \cdot R_{\mathrm{flip}} \cdot R_\alpha \cdot L_c^{-1}$,
where $R_{\mathrm{flip}}$ is a $180^\circ$ rotation about the local $x$-axis (making $z$-axes anti-parallel for face-to-face contact), and $R_\alpha$ is an optional user-specified rotation about the shared $z$-axis. This single matrix multiplication---with no iterative constraint solving---produces bit-identical results across runs, satisfying Proposition~\ref{prop:determinism}.

\subsection{Mate Types}

The library implements seven mate types, each extending the base transform (Eq.~\ref{eq:mate_transform}) with type-specific geometric semantics and parameter validation. Table~\ref{tab:mate_impl} summarizes their implementations.

\begin{table}[H]
\centering
\small
\caption{Mate type implementations with their geometric semantics.}
\label{tab:mate_impl}
\begin{tabular}{@{}lp{7.5cm}@{}}
\toprule
\textbf{Mate Type} & \textbf{Semantics} \\
\midrule
\texttt{face\_to\_face} & Coplanar normals with optional in-plane offsets $(u,v)$ and normal displacement $d_n$. \\
\texttt{coaxial} & Anti-parallel $z$-axes with optional depth offset along the shared axis. \\
\texttt{coaxial\_face} & Axis alignment with contact; used for bolt-in-counterbore placement. \\
\texttt{gear\_mesh} & Validates matching module $m$ and pressure angle; computes center distance $(z_1+z_2)\cdot m/2$ and applies phase correction when $(z_1+z_2)$ is odd. \\
\texttt{press\_fit} & Validates shaft diameter $\geq$ hole diameter (interference condition), then delegates to coaxial. \\
\texttt{thread\_engage} & Validates thread specification compatibility, then delegates to coaxial with depth. \\
\texttt{snap\_to\_face} & Parametric $(u,v)$ positioning on a planar face. \\
\bottomrule
\end{tabular}
\end{table}

\subsection{Robust Boolean Operations}

To address the systematic failure mode where CadQuery's \texttt{union()} silently produces disconnected compounds under face-only contact, the library provides \textsc{SafeUnion} (Proposition~\ref{prop:safeunion}), which employs a three-strategy cascade: (1)~standard union when volumetric overlap exists; (2)~glue-based fusion when bounding boxes touch; and (3)~automatic micro-extension along the shortest gap direction when neither applies. A post-condition check asserts exactly one output solid, raising an error with diagnostic bounding-box information rather than propagating a disconnected compound.

\subsection{Port-Geometry Verification}

The library verifies that declared ports correspond to actual geometric features using OpenCascade's \texttt{BRepClass3d\_SolidClassifier}, implementing the port-geometry consistency checks defined in Definition~\ref{def:consistency}:

\begin{itemize}[nosep,leftmargin=*]
\item \textbf{Bore check}: samples axial points along the declared bore direction (must classify as \texttt{OUT} indicating cavity) and radial ring points (at least one must classify as \texttt{IN} indicating surrounding material).
\item \textbf{Shaft check}: samples axial points at quarter and three-quarter shaft length (must classify as \texttt{IN}) and a radial point outside the shaft diameter (must classify as \texttt{OUT}).
\item \textbf{Flat face check}: iterates all B-Rep faces on the solid to find one with collinear normal (within $10^{-2}$\,rad) passing within 1\,mm of the port origin.
\end{itemize}

Verification failures produce structured error messages with natural-language hints designed for LLM-consumable repair feedback, directly supporting the iterative generation-repair loop in Algorithm~\ref{alg:synthesis}.

\subsection{Factory and Adapter System}

The library includes 8 parametric base-frame factories (Table~\ref{tab:factory_registry}) and adapter modules for two third-party libraries, covering fasteners, bearings, and gears:

\begin{itemize}[nosep,leftmargin=*]
\item \textbf{Fastener adapters}: wrap \texttt{cq\_warehouse} screws, nuts, and washers with 5, 3, and 3 ports respectively, including thread ports, head seats, and bearing faces.
\item \textbf{Gear adapters}: wrap \texttt{cq\_gears} spur gears with bore, mesh point (at pitch radius with transmission parameters in $\phi$), and face ports.
\item \textbf{Bearing adapters}: wrap \texttt{cq\_warehouse} bearings with bore, outer race (\texttt{PRESS\_FIT} type), and face ports.
\end{itemize}

A \texttt{user.*} factory namespace enables LLM-synthesized custom factories via dynamic import, supporting the open-world component synthesis described in Section~\ref{sec:method}.

\subsection{Declarative Tree-Driven Assembly}

The function \texttt{assemble\_from\_tree()} accepts a JSON specification containing \texttt{parts\_index} (ordered dictionary of part nodes) and \texttt{mates} (ordered list of mate declarations). It materializes all parts via factory dispatch, adds the first part as the base, then iterates mates in declaration order---each calling \texttt{AssemblyExt.mate()}. This declarative interface maps naturally to the structured Assembly Specification (Definition~\ref{def:spec}), enabling the LLM to output structured data rather than imperative CadQuery code.

\section{Built-in Parametric Factory Registry}
\label{app:factories}

The assembly library ships with 13 built-in parametric factories that cover the most commonly occurring mechanical components in the benchmark assemblies. Table~\ref{tab:factory_registry} summarizes each factory together with its key parameters and the typed ports it declares. These factories serve as the default component source during assembly realization; components not covered by the registry are synthesized on demand via the LLM (Algorithm~\ref{alg:synthesis}).

\begin{table}[H]
\centering
\small
\caption{The 13 built-in parametric factories with their key parameters and declared ports.}
\label{tab:factory_registry}
\begin{tabular}{llp{5cm}}
\toprule
\textbf{Factory} & \textbf{Key Parameters} & \textbf{Ports} \\
\midrule
\texttt{rect\_plate} & $L, W, T$ & top\_face, bottom\_face, 4$\times$ side\_face \\
\texttt{l\_bracket} & $L_1, L_2, W, T$ & face\_a, face\_b, web\_face \\
\texttt{u\_channel} & $L, W, H, T$ & floor\_face, 2$\times$ wall\_face \\
\texttt{boss} & $D, H, \text{bore}$ & top\_face, bore\_port \\
\texttt{bolt\_circle\_flange} & $D, H, \text{bolt\_d}, n$ & top\_face, bottom\_face, bolt\_circle \\
\texttt{housing\_block} & $L, W, H, \text{bore}$ & 6 faces, bore\_port \\
\texttt{stepped\_shaft} & segments & per-segment shaft\_seats \\
\texttt{coupling\_hub} & $D, L, \text{bore}$ & shaft\_seat, face \\
\texttt{fastener.bolt} & $d, L, \text{head}$ & thread, head\_face \\
\texttt{fastener.nut} & $d, \text{width}$ & thread\_internal, face \\
\texttt{fastener.washer} & $d_i, d_o, T$ & top\_face, bottom\_face \\
\texttt{gear.spur} & $m, z, \text{width}, \text{bore}$ & mesh\_point, bore \\
\texttt{bearing.ball} & $d_i, d_o, W$ & inner\_ring, outer\_ring \\
\bottomrule
\end{tabular}
\end{table}

\section{YAML Specification Format}
\label{app:spec_format}

The Assembly Specification (Definition~\ref{def:spec}) is serialized as a YAML document that the LLM produces during semantic decomposition. The format declares parts with their factory references and parameters, followed by mates that specify port endpoints, mate types, and axiom references. The following listing shows a representative specification for a single-stage spur gear reducer.

\begin{verbatim}
assembly:
  id: gear_reducer_v1
  description: "Single-stage spur gear reducer"
  parts:
    - id: housing
      factory: base_frame.housing_block
      params: {length: 120, width: 80, height: 60, bore_d: 25}
    - id: input_gear
      factory: gear.spur
      params: {module: 2, teeth: 20, width: 15, bore_d: 12}
    - id: output_gear
      factory: gear.spur
      params: {module: 2, teeth: 40, width: 15, bore_d: 16}
  mates:
    - id: m_input_shaft
      mate_type: coaxial
      base: {ref: housing, port: bore_port}
      incoming: {ref: input_gear, port: bore}
      axioms: [C-02, P-01]
    - id: m_gear_mesh
      mate_type: gear_mesh
      base: {ref: input_gear, port: mesh_point}
      incoming: {ref: output_gear, port: mesh_point}
      axioms: [G-02]
\end{verbatim}

\section{Failure Mode Catalog}
\label{app:failures}

During development and evaluation, we systematically cataloged recurring failure modes encountered in LLM-driven CAD assembly generation. Table~\ref{tab:failure_modes} summarizes the five most representative failure modes, their root causes, observable symptoms, and the specific validation gate in \textsc{AssemCAD} that detects and prevents each failure from propagating to the final assembly.

\begin{table}[H]
\centering
\small
\caption{Systematic catalog of identified failure modes, their root causes, and the system component that addresses each.}
\label{tab:failure_modes}
\begin{tabularx}{\textwidth}{lXXl}
\toprule
\textbf{Failure Mode} & \textbf{Root Cause} & \textbf{Symptom} & \textbf{Gate} \\
\midrule
Phantom bore & \texttt{Workplane("XZ")} sign trap causes \texttt{.cut()} to miss solid & Axle transfixes un-drilled arm ($\pi r^2 L$ clash) & Port-geom. \\
Disjoint compound & \texttt{union(glue=False)} on face-contact & Part renders as floating sub-pieces & \texttt{safe\_union} \\
All-coincident parts & Zero-offset \texttt{face\_to\_face} to same port & Multiple parts at same location & Overlap validator \\
Port-frame surprise & Auto-derived $u$/$v$ axes on oblique normal & Offset in unexpected world direction & Strict \texttt{x\_dir} \\
Semantic mis-selection & LLM picks wrong factory for role & Geometry correct but functionally wrong & Visual reflection \\
\bottomrule
\end{tabularx}
\end{table}

\section{Benchmark Construction}
\label{sec:benchmark_construction}

\vspace{2em}
\begin{wraptable}{r}{0.35\linewidth}
\centering
\caption{Statistics of AssemBench.}
\label{tab:benchmark_stats}
\small
\setlength{\tabcolsep}{5pt}
\begin{tabular}{lr}
\toprule
\textbf{Metric} & \textbf{Value} \\
\midrule
Assemblies & 120 \\
Rendered Image sets & 120 \\
GT STEP Models & 120 \\
\midrule
Avg. Bodies & 3.38 \\
Avg. Occurrences & 4.52 \\
Avg. Joints & 2.23 \\
Avg. Contacts & 9.43 \\
Avg. Holes & 4.78 \\
Avg. B-rep Faces & 174.81 \\
\midrule
Avg. Prompt Length & 214 words \\
Free-form Ratio & 0.9\% \\
\bottomrule
\end{tabular}
\vspace{-4mm}
\end{wraptable}

\subsection{Construction Pipeline}

\textsc{AssemBench} is constructed upon the assembly subset of the Fusion360 Gallery dataset~\cite{willis2021fusion360}. Rather than manually annotating CAD assemblies, we develop a semi-automatic annotation pipeline that combines rule-based parsing with vision-language understanding to produce standardized natural-language assembly specifications.

Given an assembly instance, we first parse its native Fusion360 assembly JSON file using a rule-based parser. The parser extracts structured assembly information, including component hierarchy, assembly occurrences, joints, contacts, and other assembly metadata. This structured representation provides an explicit description of the assembly topology and engineering relationships.

To complement the structural information, each assembly is rendered from two canonical viewpoints using the original CAD geometry. The rendered images preserve the global appearance and spatial layout of the assembly, providing visual cues that are difficult to infer solely from symbolic assembly graphs.

Finally, both the parsed assembly structure and the rendered multi-view images are provided to a vision-language model (VLM). Rather than performing generic image captioning, the VLM acts as an engineering-aware annotator that jointly reasons over geometric appearance and assembly structure to generate standardized CAD briefs. During annotation, the VLM is instructed to preserve engineering semantics, accurately describe functional components and assembly relationships, reflect structural complexity, and avoid unnecessary implementation details. This process produces natural-language specifications that remain faithful to the underlying CAD assemblies while maintaining consistent annotation quality across the benchmark.

Figure~\ref{fig:benchmark_pipeline} illustrates the complete construction pipeline.

\begin{figure*}[t]
\centering
\includegraphics[width=\textwidth]{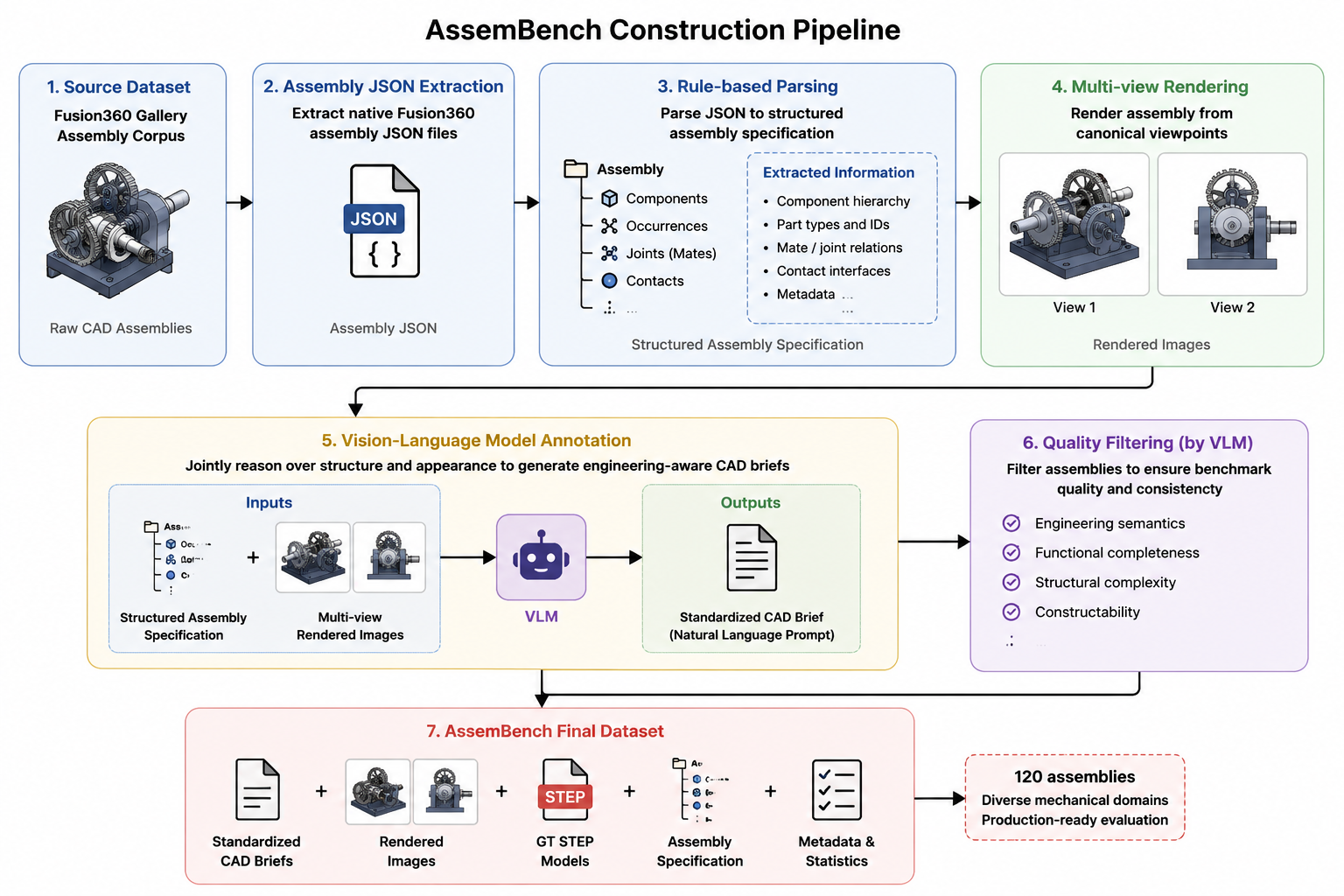}
\caption{
The pipeline of constructing AssemBench
}
\label{fig:benchmark_pipeline}
\end{figure*}

\subsection{Prompt Generation}

The generated CAD briefs are intended to describe \emph{what} should be constructed rather than \emph{how} to implement the corresponding CAD program. Therefore, the prompts emphasize engineering functionality, component interactions, and assembly semantics instead of low-level geometric operations or CAD-specific commands.

To improve consistency, all prompts follow a unified annotation protocol. In particular, the VLM is encouraged to:

\begin{itemize}[leftmargin=*]
    \item describe the overall mechanical function of the assembly;
    \item identify major components and their functional roles;
    \item explain key assembly relationships and engineering semantics;
    \item preserve the structural complexity of the original assembly;
    \item avoid implementation-specific details such as CAD operations or programming instructions.
\end{itemize}

This standardized annotation protocol substantially reduces stylistic variation across prompts while preserving the engineering information required for production-ready CAD assembly generation. The quantitative results are shown in Table~\ref{tab:benchmark_stats}

\section{Benchmark Visualization}
\label{sec:benchmark_visualization}

Figure~\ref{fig:benchmark_examples} shows representative examples from \textbf{AssemBench}. Each example contains decomposed CAD components and the corresponding assembled mechanical system.

\begin{figure*}[t]
    \centering
    \setlength{\tabcolsep}{2pt}
    \renewcommand{\arraystretch}{0.9}

    \begin{tabular}{cccccc}
        \multicolumn{5}{c}{\textbf{Components}} & \textbf{Assembly} \\[-0.5mm]

        \includegraphics[width=0.095\linewidth]{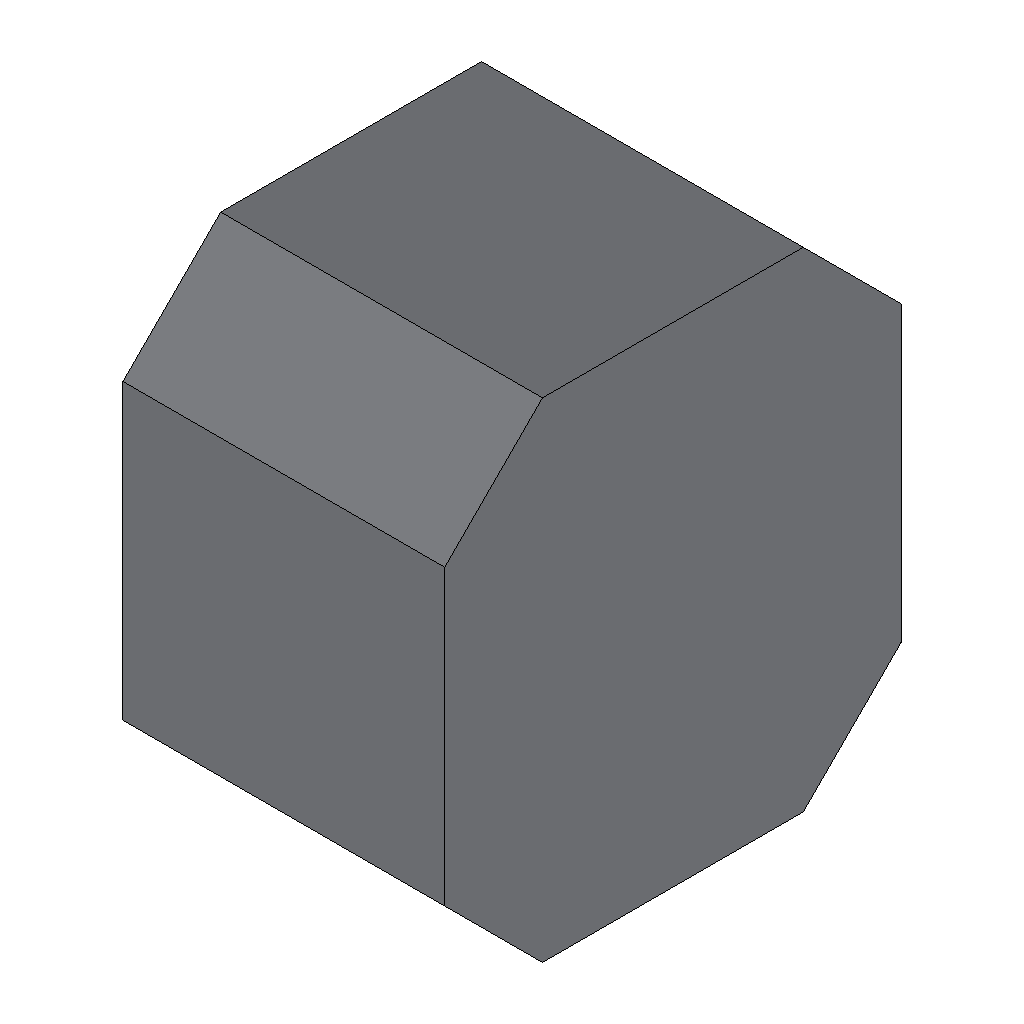} &
        \includegraphics[width=0.095\linewidth]{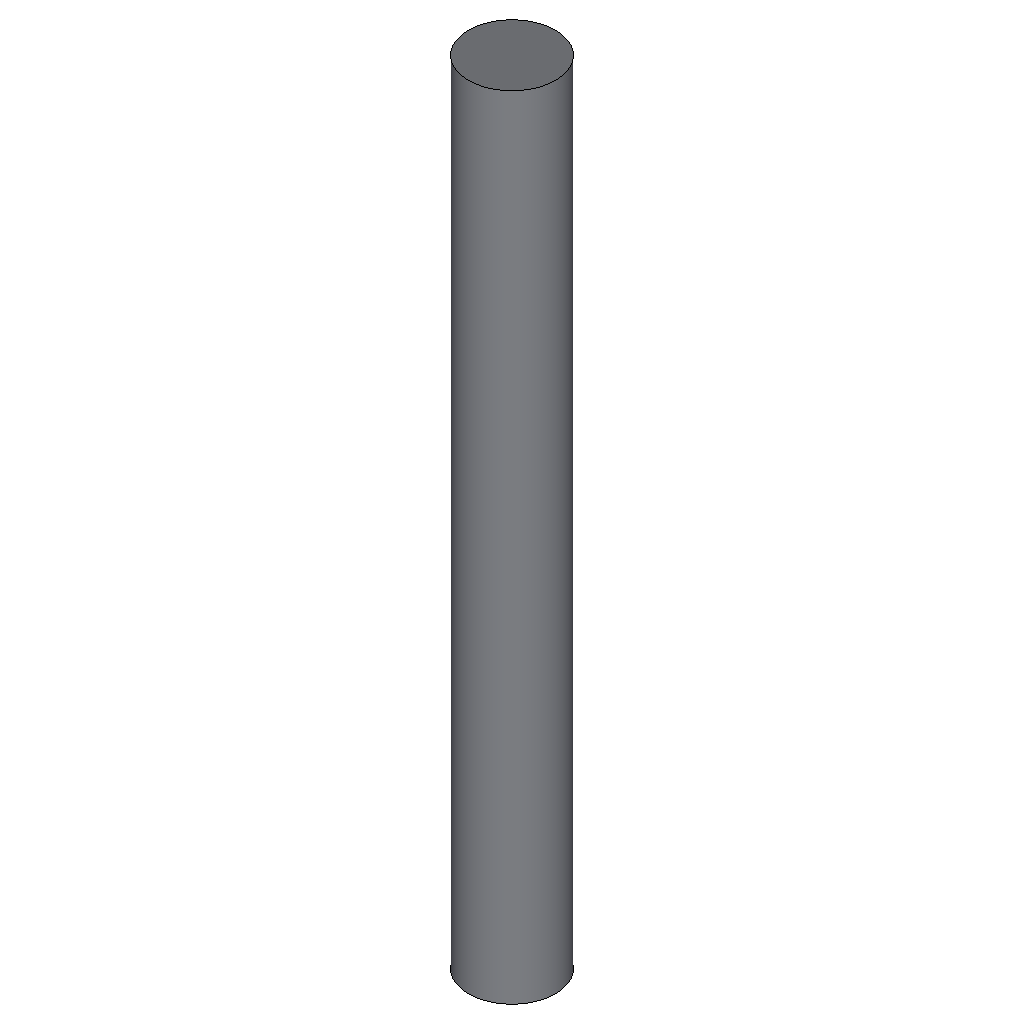} &
        \includegraphics[width=0.095\linewidth]{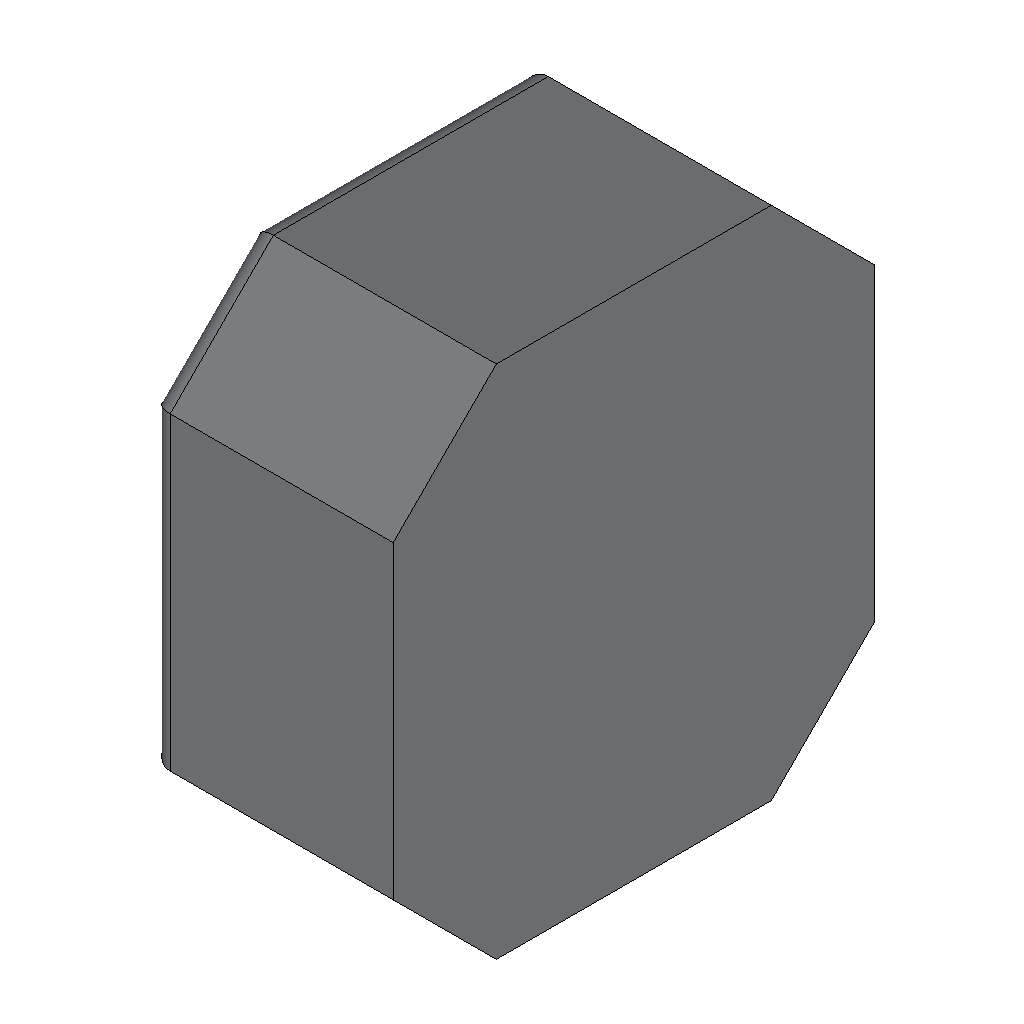} &
        \includegraphics[width=0.095\linewidth]{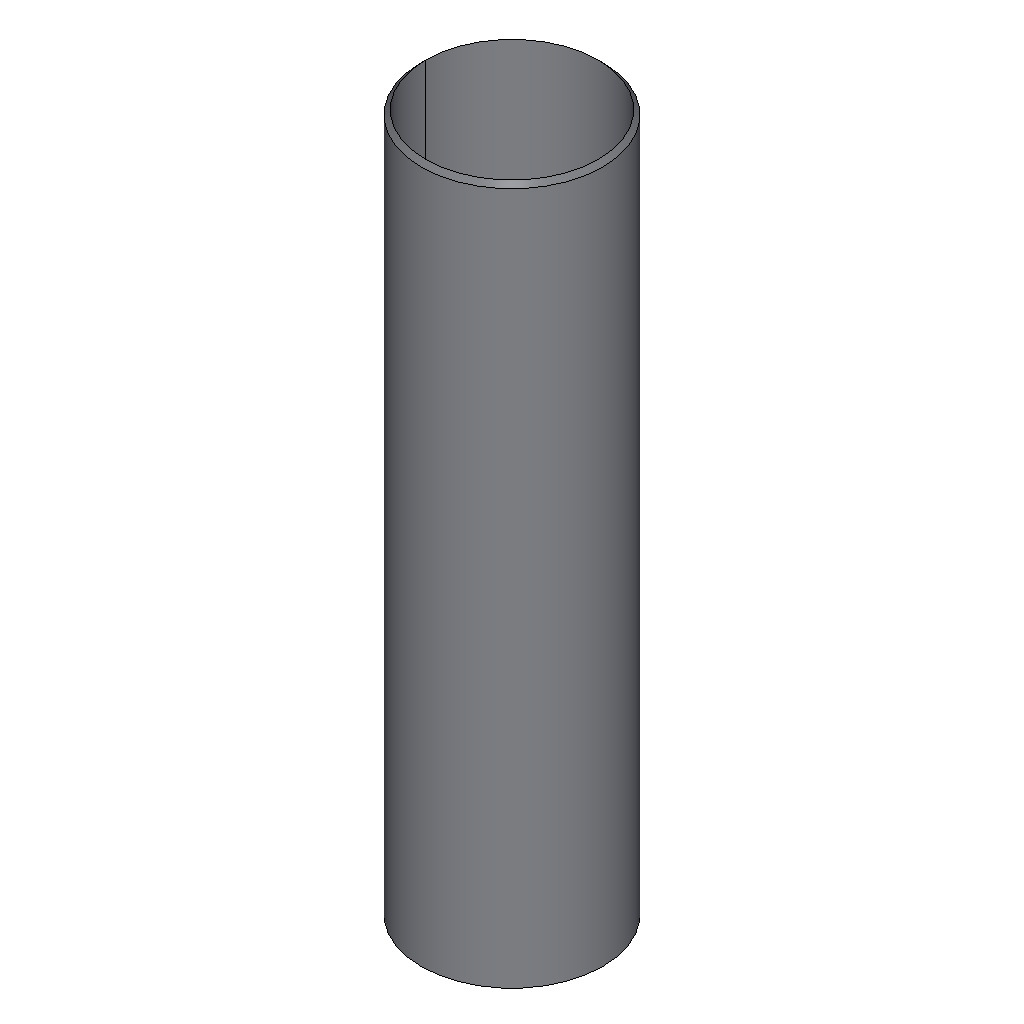} &
        \includegraphics[width=0.095\linewidth]{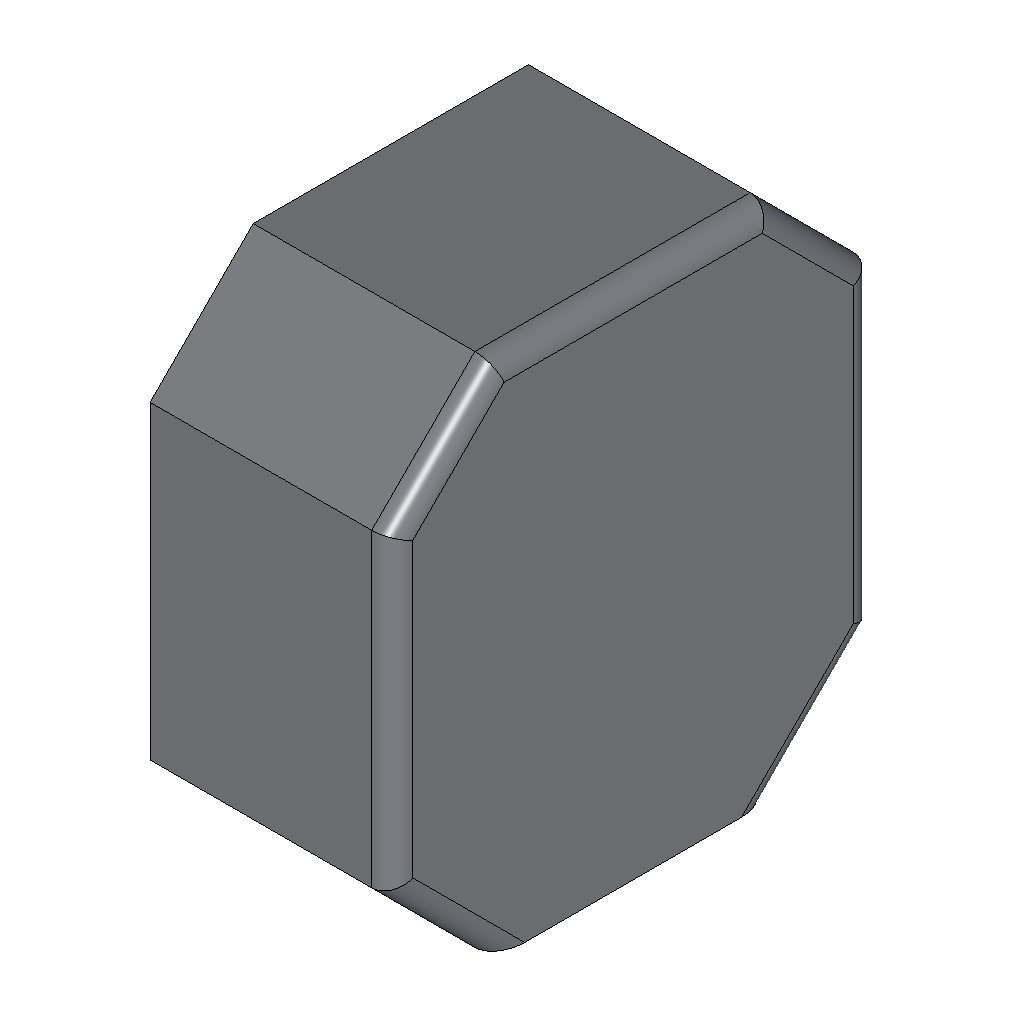} &
        \includegraphics[width=0.19\linewidth]{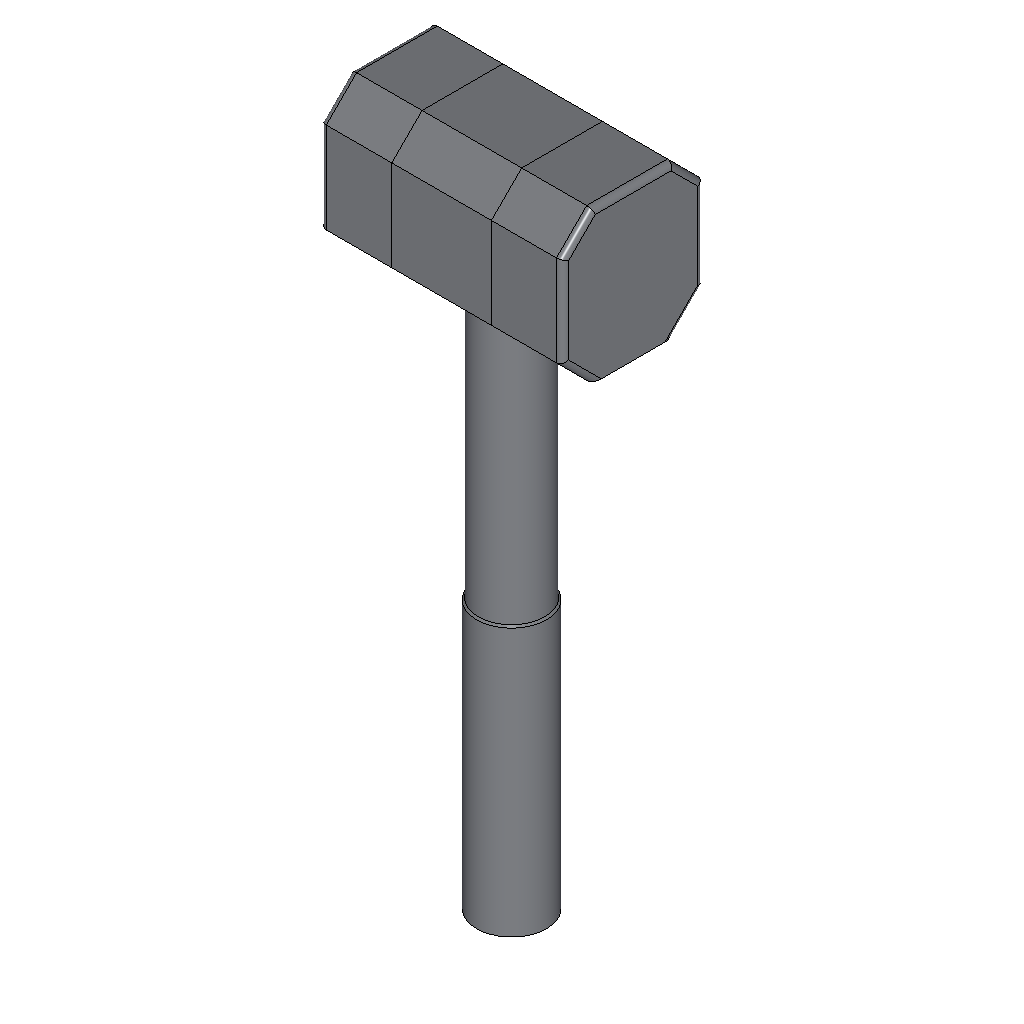}
        \\[2mm]

        \includegraphics[width=0.095\linewidth]{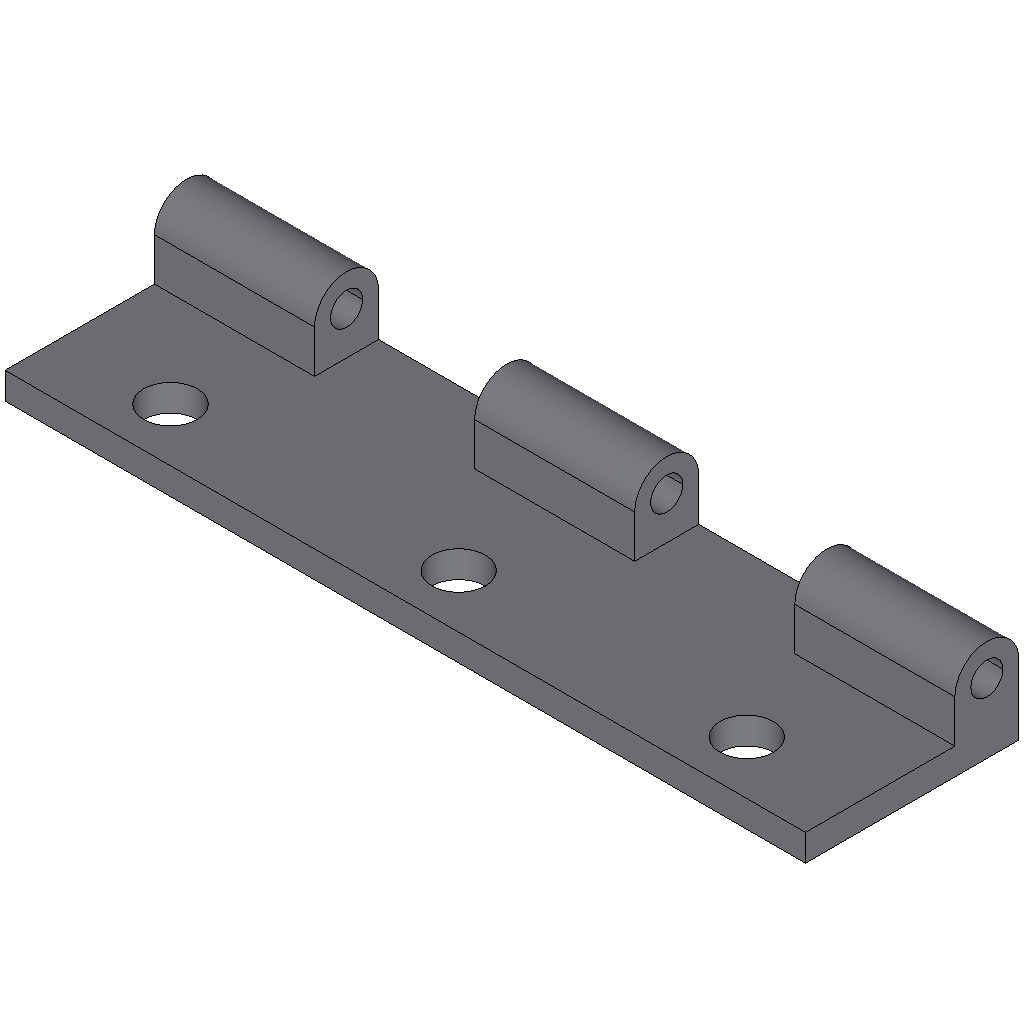} &
        \includegraphics[width=0.095\linewidth]{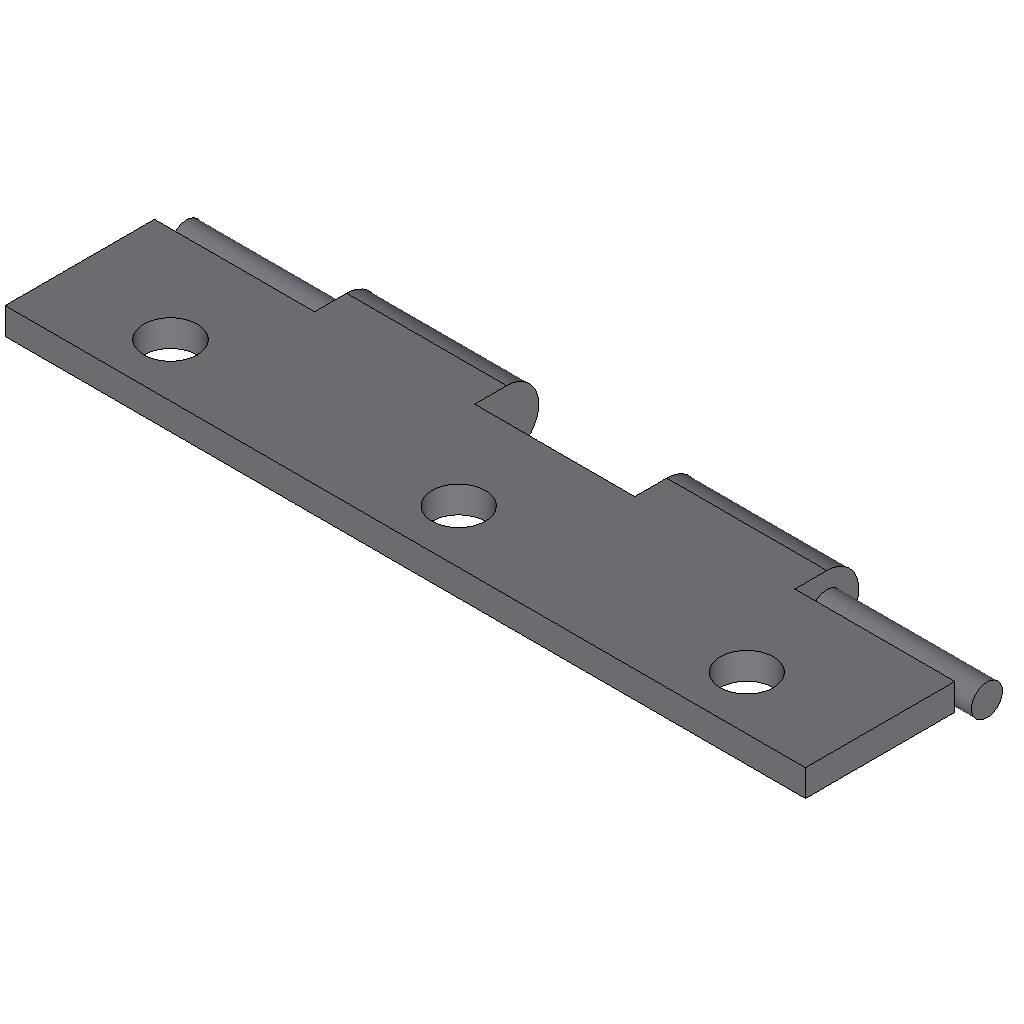} &
        & & &
        \includegraphics[width=0.19\linewidth]{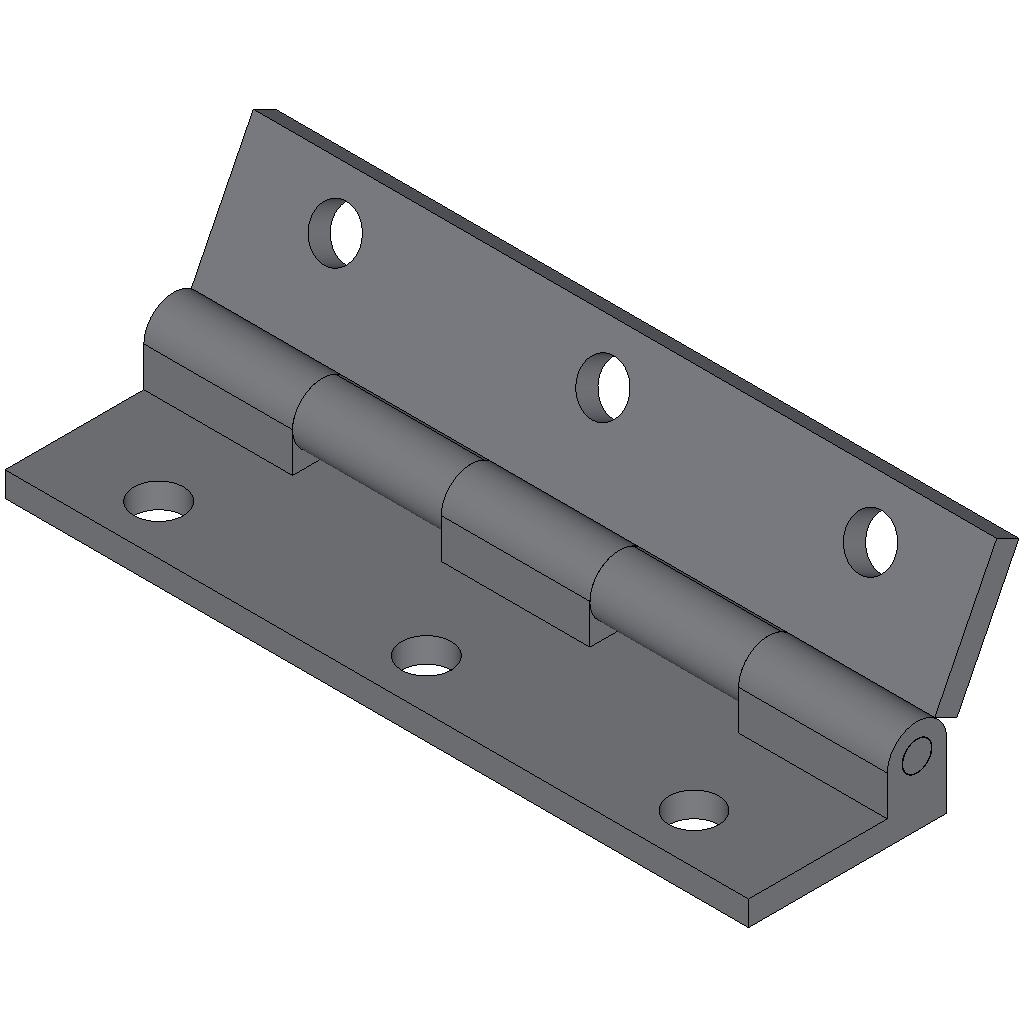}
        \\[2mm]
        \includegraphics[width=0.095\linewidth]{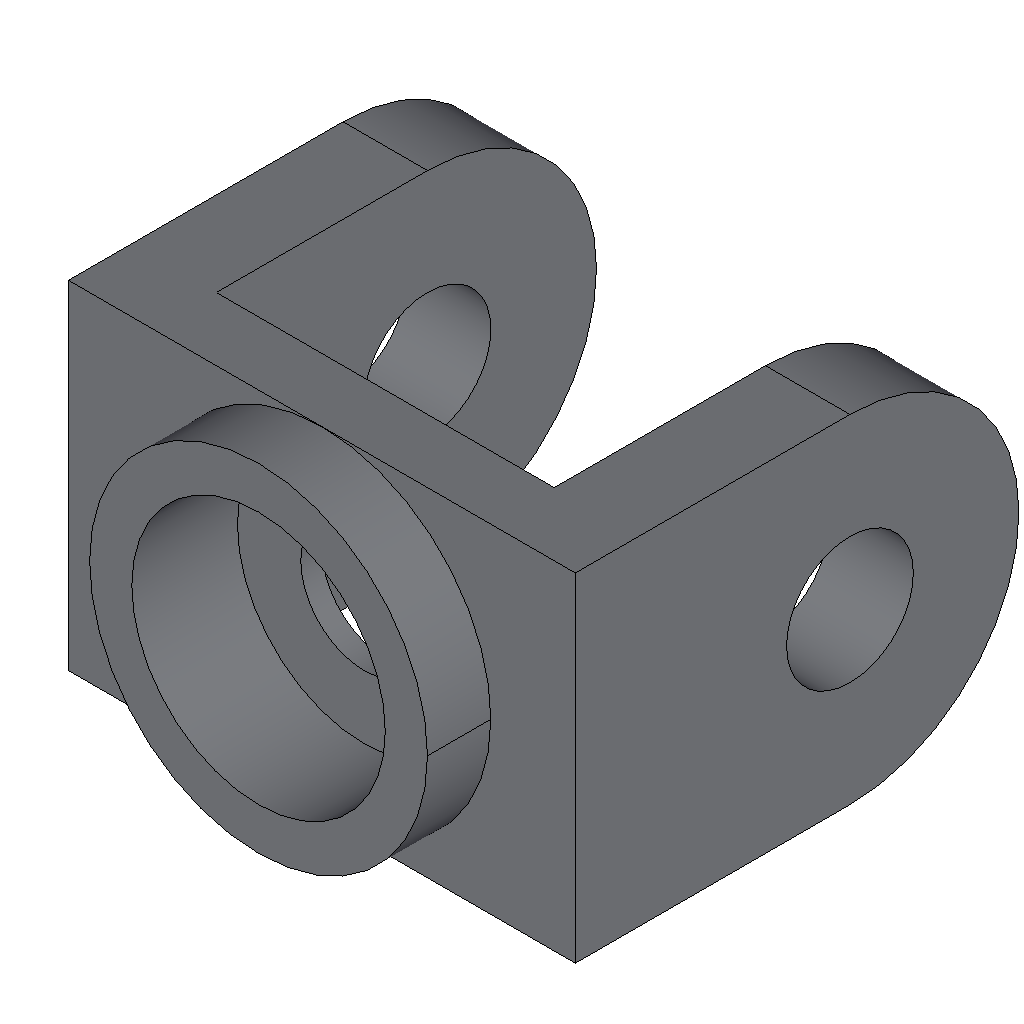} &
        \includegraphics[width=0.095\linewidth]{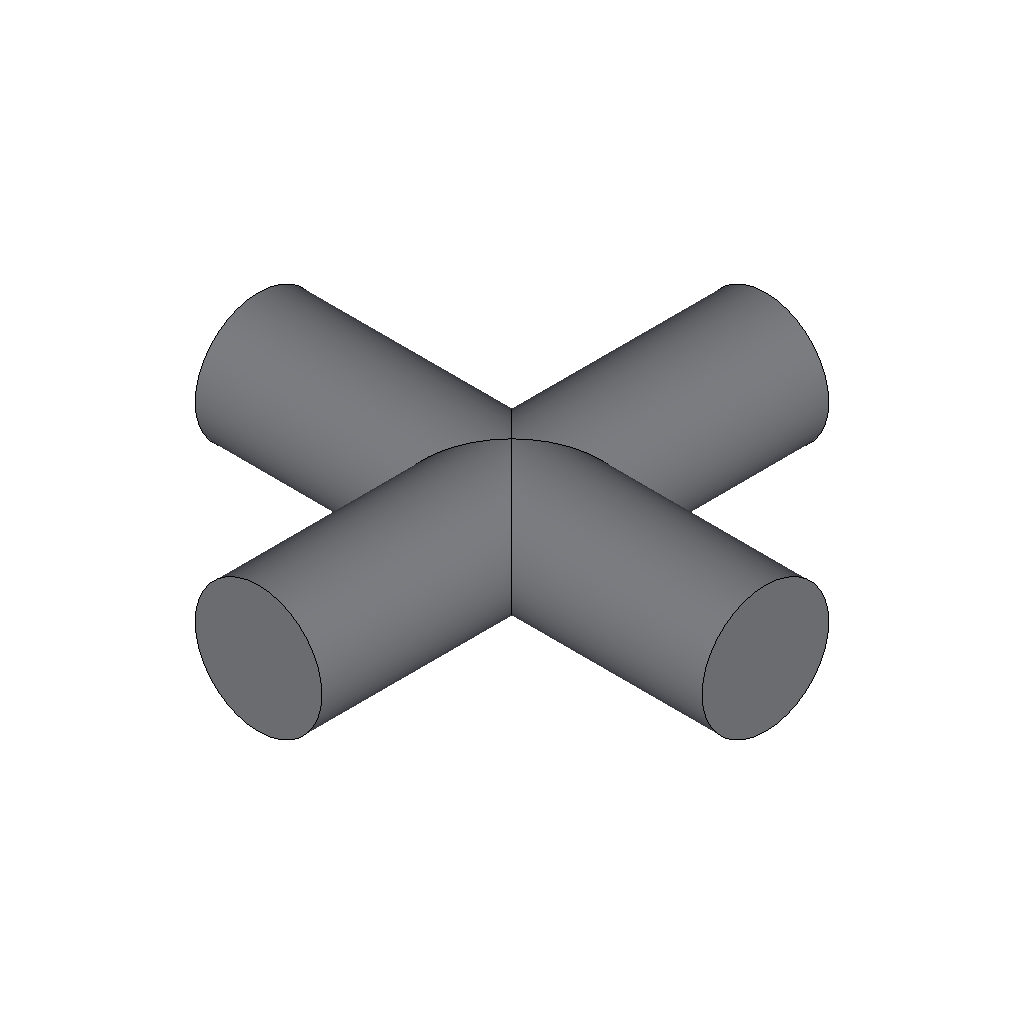} &
        \includegraphics[width=0.095\linewidth]{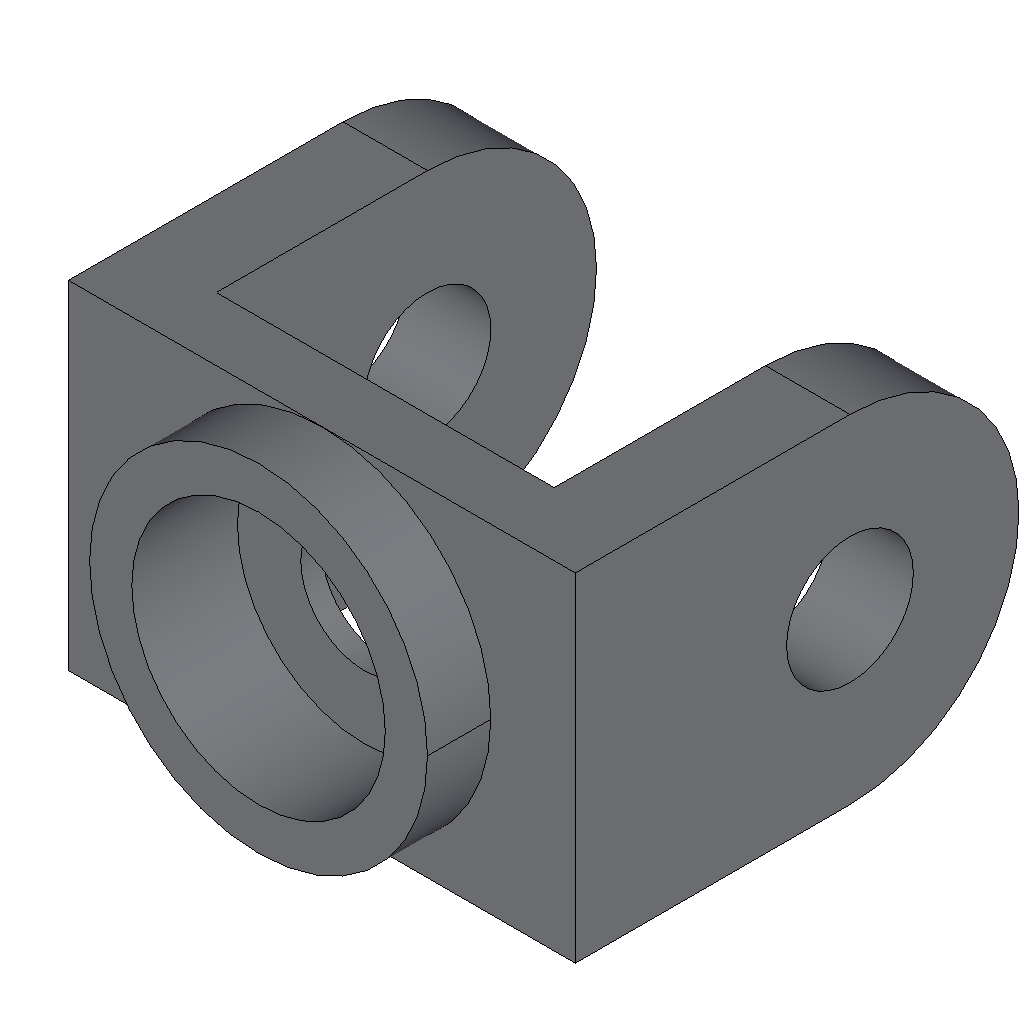} &
        & &
        \includegraphics[width=0.19\linewidth]{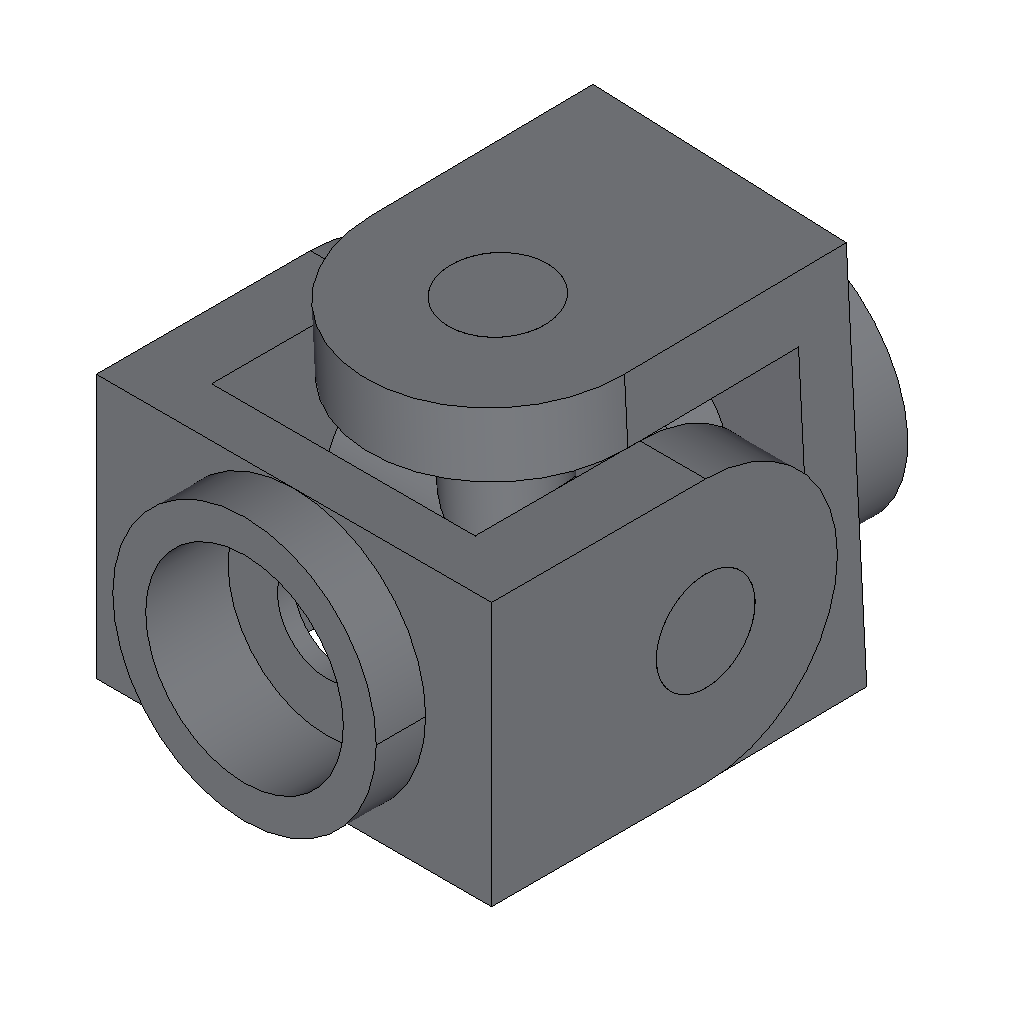}
    \end{tabular}

    \vspace{-1mm}
    \caption{
    Representative examples from \textbf{AssemBench}. Each row shows decomposed CAD components and the corresponding assembled mechanical system.
    }
    \label{fig:benchmark_examples}
\end{figure*}

\section{On the Unreliability of Chamfer Distance for Assembly Evaluation}
\label{app:cd_unreliability}

Chamfer Distance (CD) is widely used to evaluate geometric similarity in 3D reconstruction and CAD generation tasks. However, we argue that CD is fundamentally unreliable for evaluating complex mechanical assemblies. This section provides a formal analysis demonstrating two critical deficiencies: CD lacks \emph{assembly structure identifiability} and \emph{assembly ranking consistency}.

\subsection{Background and Definition}

We first recall the standard definition of Chamfer Distance and describe how it is applied to assembly evaluation.

\begin{definition}[Chamfer Distance]
Given two finite point sets $P=\{p_i\}_{i=1}^{N}\subset\mathbb{R}^3$ and $Q=\{q_j\}_{j=1}^{M}\subset\mathbb{R}^3$, the squared bidirectional Chamfer Distance is
\begin{equation}
\label{eq:cd_def}
\mathrm{CD}(P,Q) = \frac{1}{N}\sum_{p\in P}\min_{q\in Q}\|p-q\|_2^2 + \frac{1}{M}\sum_{q\in Q}\min_{p\in P}\|q-p\|_2^2.
\end{equation}
\end{definition}

When applied to assemblies, CD operates on a projected point set $\Phi(\mathcal{A})=\bigcup_{i=1}^{K}\mathrm{Sample}(T_i S_i)$, where each part $S_i$ is transformed by $T_i\in SE(3)$. This projection discards part identity, assembly graph structure, contact relationships, kinematic constraints, and physical feasibility.

\subsection{Information Loss under Point-Set Projection}

The fundamental issue with applying CD to assemblies is the information loss incurred by projecting a structured assembly into an unordered point set. The following lemma formalizes this observation.

\begin{lemma}[Irrecoverability of Assembly Structure]
\label{lem:irrecoverable}
Let $f$ be any evaluation function that depends only on point-set projections, i.e., $f(\mathcal{A},\mathcal{A}^{\star})=g(\Phi(\mathcal{A}),\Phi(\mathcal{A}^{\star}))$ for some function $g$. If two assemblies $\mathcal{A}_1,\mathcal{A}_2$ satisfy $\Phi(\mathcal{A}_1)=\Phi(\mathcal{A}_2)$, then $f(\mathcal{A}_1,\mathcal{A}^{\star})=f(\mathcal{A}_2,\mathcal{A}^{\star})$ for any reference $\mathcal{A}^{\star}$.
\end{lemma}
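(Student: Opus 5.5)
The argument is a direct substitution; the only real content is fixing what ``depends only on point-set projections'' means. I read the hypothesis as a factorization: there is a single function $g$, defined on pairs of finite point sets in $\mathbb{R}^3$, such that $f(\mathcal{A},\mathcal{A}^{\star})=g(\Phi(\mathcal{A}),\Phi(\mathcal{A}^{\star}))$ for every assembly $\mathcal{A}$ and every reference $\mathcal{A}^{\star}$. The essential point is that $g$ does not depend on the assemblies themselves. It sees them only through their projections.

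Fix an arbitrary reference $\mathcal{A}^{\star}$ and let $\mathcal{A}_1,\mathcal{A}_2$ satisfy $\Phi(\mathcal{A}_1)=\Phi(\mathcal{A}_2)$. Then
\[
f(\mathcal{A}_1,\mathcal{A}^{\star})=g\bigl(\Phi(\mathcal{A}_1),\Phi(\mathcal{A}^{\star})\bigr)=g\bigl(\Phi(\mathcal{A}_2),\Phi(\mathcal{A}^{\star})\bigr)=f(\mathcal{A}_2,\mathcal{A}^{\star}).
\]
The middle equality holds because $g$ is a function, so equal arguments give equal values. Since $\mathcal{A}^{\star}$ was arbitrary, this proves the lemma.

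I would then record that Chamfer Distance is an instance of the lemma. By Eq.~(\ref{eq:cd_def}), $\mathrm{CD}(\mathcal{A},\mathcal{A}^{\star})$ is computed solely from the two point sets $\Phi(\mathcal{A})$ and $\Phi(\mathcal{A}^{\star})$, so the factorization holds with $g=\mathrm{CD}$. This is what makes the lemma usable for the later non-identifiability result.

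The one subtlety concerns how $\Phi$ is made well defined, because $\mathrm{Sample}$ may be random. I would assume that $\Phi$ denotes either the fixed sampled point set or the union of transformed surfaces $\bigcup_i T_iS_i$. If $\mathrm{Sample}$ is genuinely random, I would apply the argument for each fixed sampling realization, or state it for the induced distributions of point sets. The substitution step is unchanged in either case, so I expect no real obstacle.
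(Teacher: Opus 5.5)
Your proof is correct and is essentially the paper's own argument: the paper likewise just substitutes the factorization $f=g\circ(\Phi,\Phi)$ and uses $\Phi(\mathcal{A}_1)=\Phi(\mathcal{A}_2)$ in the middle equality. Your extra remarks (Chamfer Distance as the instance $g=\mathrm{CD}$, and fixing a sampling realization if $\mathrm{Sample}$ is random) are sensible clarifications but are not needed for the lemma itself.
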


\begin{proof}
By definition, $f(\mathcal{A}_1,\mathcal{A}^{\star})=g(\Phi(\mathcal{A}_1),\Phi(\mathcal{A}^{\star}))=g(\Phi(\mathcal{A}_2),\Phi(\mathcal{A}^{\star}))=f(\mathcal{A}_2,\mathcal{A}^{\star})$.
\end{proof}

Since CD is precisely such a function, it inherits this limitation: any assembly information lost during point-set projection is invisible to CD.

\subsection{CD Lacks Assembly Structure Identifiability}

Building on the irrecoverability result, we show that CD cannot distinguish assemblies that differ in structurally important ways, such as part count, assembly graph topology, or constraint relationships whenever their surface point sets coincide.

\begin{proposition}[Non-Identifiability]
\label{prop:non_ident}
There exist assemblies $\mathcal{A}_1$ and $\mathcal{A}_2$ that differ in part count, assembly graph, or constraint relationships, yet satisfy $\Phi(\mathcal{A}_1)=\Phi(\mathcal{A}_2)$, and hence $\mathrm{CD}(\Phi(\mathcal{A}_1),\Phi(\mathcal{A}^{\star}))=\mathrm{CD}(\Phi(\mathcal{A}_2),\Phi(\mathcal{A}^{\star}))$ for any reference $\mathcal{A}^{\star}$.
\end{proposition}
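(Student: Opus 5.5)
The plan is to exhibit explicit witnesses and then invoke Lemma~\ref{lem:irrecoverable}. The proposition is existential, so it suffices to build one pair $\mathcal{A}_1,\mathcal{A}_2$ for each kind of difference (part count, assembly graph, constraint relationships) with $\Phi(\mathcal{A}_1)=\Phi(\mathcal{A}_2)$. The CD equality then follows because $\mathrm{CD}(\Phi(\mathcal{A}),\Phi(\mathcal{A}^{\star}))$ has exactly the form $g(\Phi(\mathcal{A}),\Phi(\mathcal{A}^{\star}))$ with $g=\mathrm{CD}$ from Eq.~(\ref{eq:cd_def}).

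\textbf{Part count.} Take $\mathcal{A}_1$ to be a single rectangular box $S=[0,2]\times[0,1]\times[0,1]$ with $T_1=\mathrm{id}$. Take $\mathcal{A}_2$ to be two boxes $S_a=[0,1]\times[0,1]^2$ and $S_b=[1,2]\times[0,1]^2$ joined by a \texttt{face\_to\_face} mate on the shared face $x=1$.

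The main obstacle is that $\Phi$ is defined through $\mathrm{Sample}$, and naively the union of the boundaries of $S_a$ and $S_b$ contains the internal face $\{x=1\}$, which $\partial S$ lacks. I would handle this in two steps.

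First, I fix the convention, consistent with the surface-sampling setting, that $\mathrm{Sample}$ is a deterministic function of the surface. I then choose the decomposition so that no interior face appears. For example, $\mathcal{A}_2$ can consist of two coincident copies of $S$, both placed by $T=\mathrm{id}$. This is the ``all-coincident parts'' failure mode of Table~\ref{tab:failure_modes}. Since $\mathrm{Sample}(S)\cup\mathrm{Sample}(S)=\mathrm{Sample}(S)$ as sets, we get $\Phi(\mathcal{A}_2)=\Phi(\mathcal{A}_1)$ exactly, while the part counts are $2$ versus $1$.

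Second, for the assembly-graph and constraint cases I keep the identical part geometry and placements and change only the symbolic data. Let $\mathcal{A}_1$ carry mate set $\mathcal{M}_1=\emptyset$, a disconnected graph. Let $\mathcal{A}_2$ carry a mate, say \texttt{coaxial} versus \texttt{face\_to\_face}, or \texttt{press\_fit} versus none, whose transform from Eq.~(\ref{eq:mate_transform}) reproduces the same $T_i$. This is possible because, for fixed ports, one can choose $R_\alpha$ and the offsets so that $T$ equals any prescribed placement consistent with the port frames. In this case the point sets agree trivially, since $\Phi$ depends only on $\{(S_i,T_i)\}$.

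Finally, I apply Lemma~\ref{lem:irrecoverable} with $f=\mathrm{CD}\circ(\Phi\times\Phi)$ to conclude the CD equality for every reference $\mathcal{A}^{\star}$. The only delicate point is the sampling convention in the part-count case. I would state it explicitly, and if a stochastic $\mathrm{Sample}$ is intended, I would use the coincident-copies construction with a shared seed.
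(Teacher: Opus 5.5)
Your proposal is correct, but your part-count witness differs from the paper's. The paper uses a single construction. It takes a two-part assembly $(\{S_1,S_2\},\{T_1,T_2\},\mathcal{G}_1,\mathcal{C}_1)$ and compares it with the single fused part $S'=T_1S_1\cup T_2S_2$ at identity pose. That one witness changes part count, graph and constraints at once, which is enough because the statement is a disjunction. The paper then asserts $\Phi(\mathcal{A}_1)=\Phi(\mathcal{A}_2)$ and applies Lemma~\ref{lem:irrecoverable}.

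The paper's witness is the practically meaningful one: an assembly versus its monolithic fusion. However, the claimed equality of point sets quietly requires either that $T_1S_1$ and $T_2S_2$ be disjoint, or that $S'$ be read as a compound keeping all constituent faces. This is exactly the internal-face problem you raised with the adjacent boxes, and it matters because mated parts in a real assembly usually do share contact faces.

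Your coincident-copies witness avoids this issue. It uses only determinism of $\mathrm{Sample}$ and the set-union reading of $\Phi$. Indeed, Eq.~(\ref{eq:cd_def}) is also unchanged when every point of a set is duplicated, so even a multiset reading of $\Phi$ still yields the CD equality. The cost is that your witness is physically degenerate.

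Your separate graph and constraint witnesses keep $\{(S_i,T_i)\}$ fixed and change only $\mathcal{M}$. This is the cleanest way to show that $\Phi$ ignores symbolic structure by definition. You do not need the claim that $R_\alpha$ and the offsets can reproduce a prescribed placement. Simply define the poses of $\mathcal{A}_1$ to be those produced by the mate in $\mathcal{A}_2$.
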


\begin{proof}
Consider a two-part assembly $\mathcal{A}_1=(\{S_1,S_2\},\{T_1,T_2\},\mathcal{G}_1,\mathcal{C}_1)$. Construct $\mathcal{A}_2$ as a single fused part $S'=T_1 S_1 \cup T_2 S_2$ with identity pose. Although $\mathcal{A}_1$ and $\mathcal{A}_2$ differ in part count, assembly graph, and disassembly relationships, their surface point sets are identical: $\Phi(\mathcal{A}_1)=\Phi(\mathcal{A}_2)$. By Lemma~\ref{lem:irrecoverable}, CD cannot distinguish them.
\end{proof}

This result extends to part identity swaps, left-right mirror errors, and missing constraint relationships---any structural defect that preserves the overall surface geometry remains invisible to CD.

\subsection{CD Lacks Assembly Ranking Consistency}

Beyond identifiability, a reliable assembly metric should rank a structurally superior prediction closer to the reference. We formalize this property and prove that CD violates it: a structurally complete assembly can receive a worse CD score than one missing a functionally critical component.

\begin{definition}[Assembly Ranking Consistency]
A metric $d$ is assembly-ranking-consistent if, whenever $Q_{\mathrm{asm}}(\mathcal{A}_1,\mathcal{A}^{\star})<Q_{\mathrm{asm}}(\mathcal{A}_2,\mathcal{A}^{\star})$ for an ideal assembly quality function $Q_{\mathrm{asm}}$, it follows that $d(\mathcal{A}_1,\mathcal{A}^{\star})<d(\mathcal{A}_2,\mathcal{A}^{\star})$.
\end{definition}

\begin{theorem}[Ranking Reversal]
\label{thm:reversal}
CD is not assembly-ranking-consistent. There exist a reference $P^{\star}$ and two predictions $P_{\mathrm{good}}$, $P_{\mathrm{bad}}$ such that $P_{\mathrm{good}}$ is structurally superior yet $\mathrm{CD}(P_{\mathrm{good}},P^{\star})>\mathrm{CD}(P_{\mathrm{bad}},P^{\star})$.
\end{theorem}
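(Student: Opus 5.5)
The plan is to prove Theorem~\ref{thm:reversal} by an explicit counterexample. I will build a reference in which one small, functionally critical part sits next to a large structural body, and then compare two predictions. $P_{\mathrm{bad}}$ reproduces the large body exactly but omits the critical part. $P_{\mathrm{good}}$ contains every part, correctly connected, but the large body is slightly offset or scaled. Because CD averages over sampled points, the missing small part contributes only in proportion to its point share, while a uniform small error on the large body contributes on every point. Tuning the sizes will force $\mathrm{CD}(P_{\mathrm{good}},P^{\star})>\mathrm{CD}(P_{\mathrm{bad}},P^{\star})$.

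\textbf{Construction.} Let the reference be $P^{\star}=B\cup C$, where $B$ samples a large plate-like or housing body with $N_B$ points and $C$ samples a small critical component (e.g.\ a pin or key) with $N_C$ points, $\rho:=N_C/(N_B+N_C)$ small. The part $C$ lies inside the convex hull of, or within distance $\delta$ of, $B$. Set $P_{\mathrm{bad}}=B$. This prediction has the wrong part count and assembly graph, so its $Q_{\mathrm{asm}}$ is strictly worse. Set $P_{\mathrm{good}}=(B+\mathbf{t})\cup C$ with $\|\mathbf{t}\|=\eta$, a translation along the normal of a planar body. This keeps every part, so the assembly graph and connectivity are correct, and I stipulate $Q_{\mathrm{asm}}(P_{\mathrm{good}})<Q_{\mathrm{asm}}(P_{\mathrm{bad}})$. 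This is the only place the ideal quality function enters; it needs only the monotonicity assumption that omitting a functional part is worse than a small rigid offset.

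\textbf{Estimates.} For $P_{\mathrm{bad}}$, the term $P_{\mathrm{bad}}\to P^{\star}$ in Eq.~(\ref{eq:cd_def}) is $0$. The term $P^{\star}\to P_{\mathrm{bad}}$ is zero on $B$ and at most $\delta^2$ on $C$, so $\mathrm{CD}(P_{\mathrm{bad}},P^{\star})\le\rho\,\delta^2$. For $P_{\mathrm{good}}$, take $B$ to be a flat sheet and $\mathbf{t}$ normal to it. Each point of $B+\mathbf{t}$ then has nearest reference distance at least $\min(\eta,\text{dist to }C)$, and I will place $C$ so that this is $\eta$ for most points. Both directional terms are then bounded below by roughly $(1-\rho)\eta^2$, giving $\mathrm{CD}(P_{\mathrm{good}},P^{\star})\gtrsim(1-\rho)\eta^2$. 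Choosing, say, $\eta=\delta$ and $\rho<1/3$ yields the strict reversal. I will record a concrete numeric instance (unit square sheet, small cube $C$ at height $\delta$) to make it airtight.

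\textbf{Main obstacle.} The difficulty is controlling nearest-neighbour distances rigorously in the $P_{\mathrm{good}}$ lower bound. Points of the shifted sheet near $C$ may find $C$ closer than $\eta$. I would handle this by confining $C$ to a small patch of area fraction $\epsilon$ and bounding the affected points by $\epsilon$, or by shifting $B$ away from $C$ so that distances only grow. I will also note that the argument is scale-invariant, so it persists under any normalization of the point sets.
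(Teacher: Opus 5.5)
Your proposal is correct and is essentially the paper's proof. The paper uses the same reference $P^{\star}=P_L\cup P_S$, the same $P_{\mathrm{bad}}=P_L$ and $P_{\mathrm{good}}=(P_L+\epsilon\mathbf{u})\cup P_S$, and gets the reversal condition $2N\epsilon^2>m\bar{D}^2$; this matches your $(1-\rho)\eta^2>\rho\delta^2$ up to the factor $2$ you discarded and the use of an average gap $\bar{D}$ instead of your maximal gap $\delta$. Your choice of a flat sheet shifted along its normal, away from $C$, supplies the rigorous lower bound on $\mathrm{CD}(P_{\mathrm{good}},P^{\star})$ that the paper only asserts with $\approx$; for a dense sample and a tangential shift, the nearest-neighbour distances can be far below $\epsilon$. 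Your closing remark about arbitrary normalizations is not needed, and scale invariance alone does not justify it, because per-set re-centering does more than rescale.
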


\begin{proof}
Let the reference consist of a large part with $N$ sample points $P_L$ and a small but functionally critical part with $m$ sample points $P_S$, so $P^{\star}=P_L\cup P_S$ with $N\gg m$. Let $\bar{D}^2=\frac{1}{m}\sum_{s\in P_S}\min_{l\in P_L}\|s-l\|_2^2$ denote the average squared nearest-neighbor distance from the small part to the large part.

Construct two predictions:
\begin{itemize}[nosep]
\item $P_{\mathrm{good}}=(P_L+\epsilon\mathbf{u})\cup P_S$: all parts present, large part shifted by $\epsilon$.
\item $P_{\mathrm{bad}}=P_L$: large part perfectly aligned, small critical part missing.
\end{itemize}

Direct computation yields
\[
\mathrm{CD}(P_{\mathrm{good}},P^{\star})\approx\frac{2N\epsilon^2}{N+m}, \qquad \mathrm{CD}(P_{\mathrm{bad}},P^{\star})=\frac{m\bar{D}^2}{N+m}.
\]
Ranking reversal occurs when $2N\epsilon^2>m\bar{D}^2$, i.e.,
\begin{equation}
\label{eq:reversal}
N > \frac{m\bar{D}^2}{2\epsilon^2}.
\end{equation}
Since large structural parts (housings, frames, base plates) typically have $N\gg m$ relative to small connectors (bolts, pins, bearings), this condition is easily satisfied in practice. Thus a structurally complete prediction receives a worse CD score than one missing a functionally critical component.
\end{proof}

\subsection{Surface-Area Bias}

Under uniform surface sampling, the number of points from part $i$ satisfies $N_i\approx N_{\mathrm{total}}\cdot A_i / A_{\mathrm{total}}$. Therefore, global CD implicitly weights each part's error by its surface area fraction:
\[
\mathrm{CD}_{\mathrm{global}} \approx \sum_{i=1}^{K}\frac{A_i}{A_{\mathrm{total}}}\,e_i,
\]
where $e_i$ is the per-part average squared error. Large-surface-area components dominate the score regardless of functional importance, while small but mechanically critical parts (fasteners, bearings, shafts) contribute negligibly.

\end{document}